\documentclass{article}

\usepackage{microtype}
\usepackage{graphicx}
\usepackage{subcaption}
\usepackage{booktabs} 

\usepackage{multirow}     
\usepackage{booktabs}     
\usepackage{colortbl}     
\usepackage{xcolor}       

\usepackage{hyperref}

\usepackage[preprint]{icml2026}
\usepackage{amsmath}
\usepackage{amssymb}
\usepackage{mathtools}
\usepackage{amsthm}

\usepackage[capitalize,noabbrev]{cleveref}

\theoremstyle{plain}
\newtheorem{theorem}{Theorem}[section]

\newtheorem{lemma}[theorem]{Lemma}

\theoremstyle{definition}

\theoremstyle{remark}

\usepackage[textsize=tiny]{todonotes}

\icmltitlerunning{Bridging the Stability-Expressivity Gap: Synthetic Data Scaling and Preference Alignment for Low-Resource SLMs}

\begin{document}

\twocolumn[
  \icmltitle{Bridging the Stability-Expressivity Gap: Synthetic Data Scaling \\ and Preference Alignment for Low-Resource Spoken Language Models}



  \icmlsetsymbol{equal}{*}

  \begin{icmlauthorlist}
    \icmlauthor{Yizhong Geng}{bupt}
    \icmlauthor{Yanliang Li}{logic}
    \icmlauthor{Jinghan Yang}{bupt}
    \icmlauthor{Tianhan Jiang}{uci}
    \icmlauthor{Boxun An}{nu}
    \icmlauthor{Ya Li}{bupt}
    \icmlauthor{Xiaoyu Shen}{eit}
  \end{icmlauthorlist}

  \icmlaffiliation{bupt}{Beijing University of Posts and Telecommunications, Beijing, China}
  \icmlaffiliation{logic}{Beijing Logic Intelligence Technology, Beijing, China}
  \icmlaffiliation{uci}{University of California, USA}
  \icmlaffiliation{nu}{Northwestern University, USA}
  \icmlaffiliation{eit}{Eastern Institute of Technology, Ningbo, China}

  \icmlcorrespondingauthor{xiaoyushen}{xyshen@eitech.edu.cn
}


  \vskip 0.3in
]



\printAffiliationsAndNotice{}  


\begin{abstract}
Spoken Language Models (SLMs) have emerged as a promising paradigm for speech synthesis by bypassing explicit grapheme-to-phoneme pipelines. However, their effectiveness in low-resource languages remains fundamentally limited by the scarcity of transcribed speech. In practice, synthetic data has become the primary strategy for scaling SLMs in such settings, providing reliable phonetic supervision when real data is insufficient. In this work, we show that this reliance introduces a fundamental trade-off, which we term the Stability-Expressivity Gap: while synthetic data improves phonetic accuracy, it progressively suppresses prosodic variability, ultimately leading to a collapse of expressivity (Synthetic Erosion). To bridge this gap, we propose two self-alignment frameworks. Disentanglement-Guided Self-Alignment (DGSA) recovers expressivity for complex languages by exploiting prosody-timbre separation. For regimes where authentic references are exceptionally limited, Temperature-Driven Self-Critique (TDSC) stabilizes generation through automated exploration and filtering. Our approach outperforms strong commercial systems, including ElevenLabs and Gemini Pro, and enables the first zero-shot voice cloning capability for Lao. Audio Samples are available at: \url{https://luoji.cn/static/multilantts-demo-main/}.
\end{abstract}

\section{Introduction}

Conventional text-to-speech (TTS) systems typically rely on grapheme-to-phoneme (G2P) conversion, which creates a significant bottleneck for languages with intricate phonological rules and irregular scripts~\cite{ren2021fastspeech, shen2018natural}. Spoken Language Models (SLMs) have emerged as a powerful alternative by modeling discretized neural tokens in an autoregressive manner, thereby bypassing explicit G2P modules and enabling advanced capabilities~\cite{wang2023neural, borsos2023audiolm, kharitonov2022text}. Nevertheless, this transition from rule-based pipelines to data-driven modeling does not eliminate the global resource disparity. Because SLMs remain fundamentally constrained by the volume and quality of transcribed speech, their performance often degrades when applied to languages outside the major high-resource groups such as English or Mandarin~\cite{pratap2024scaling, grutzner2024surveying}.

We identify Southeast Asia as a critical domain for evaluating low-resource SLMs, as it highlights a stark contrast to the data abundance of mainstream languages\cite{nguyen2024seallms,susanto2025sea}. Within this region, we categorize languages along a resource spectrum to address distinct modeling challenges. Thai represents a phonetically complex but digitally under-represented language, where its five lexical tones and complex tonal sandhi require precise modeling\cite{geng2025scaling,shen2024encoding}. In contrast, Lao exemplifies an even more constrained regime characterized by severe data scarcity. While authentic Lao corpora are not entirely absent, their publicly available volume is exceptionally limited, which is further compounded by a smaller speaker population and fewer commercial applications\cite{liu2025lao,mcgiff2025overcoming}.

A natural solution is to leverage synthetic data~\cite{de2025scaling,  ulm2025contrastive}. Existing deterministic TTS systems can already synthesize speech for many under-represented languages with reasonable phonetic accuracy, even if the outputs are prosodically flat~\cite{shumailov2023curse}. This motivates our approach of fine-tuning pre-trained SLM backbones with synthetic data, where the backbone contributes expressive prosodic priors while synthetic data ensures phonetic stability~\cite{minixhofer2025scaling, kwon2025parameter}. However, scaling this paradigm on Thai reveals a critical trade-off that we define as the Stability-Expressivity Gap. We observe that increasing the synthetic data ratio improves phonetic stability but progressively degrades prosodic naturalness. Through scaling law analysis, we discover that beyond a critical synthetic ratio, the model's prosodic distribution collapses toward the low-entropy patterns of synthetic data, a phenomenon we call Synthetic Erosion~\cite{alemohammad2023self, radford2023robust}.

This scaling behavior locks practitioners into suboptimal data configurations. To break this constraint, we propose Disentanglement-Guided Self-Alignment (DGSA), a self-alignment framework that exploits the architectural properties of Flow-Matching SLMs~\cite{du2024cosyvoice, zhang2025vevo}. Our key observation is that these models separate prosody from timbre. The Text-Speech LM encodes content and speaking style via optional style tokens, while the Flow-Matching Transformer independently controls speaker identity via timbre embeddings. By selectively enabling style conditioning, we generate outputs with identical speaker identity but contrasting prosodic quality. This architectural disentanglement enables the model to construct its own preference pairs without external annotation, achieving self-supervised alignment that simultaneously improves stability and expressivity.

DGSA relies on real speech to extract style references. However, in extreme scenarios where high-quality authentic corpora are practically inaccessible, the autoregressive process becomes unstable without authentic references. This lack of human-recorded anchors often causes token predictions to collapse into repetitive loops or phonetic hallucinations~\cite{zhou2024phonetic}. To address this, we introduce Temperature-Driven Self-Critique (TDSC), a closed-loop mechanism that generates candidates across temperature gradients, applies ASR-based filtering, and iteratively refines the model using accepted samples as pseudo-real anchors~\cite{kahn2020self, yuan2024self}. This self-improvement loop stabilizes decoding while recovering prosodic diversity without the need for human-labeled data.

Our main contributions are:
\begin{itemize}
    \item \textbf{Characterizing the Stability-Expressivity Gap.} We conduct 
    the first systematic scaling study of synthetic data in low-resource SLMs, 
    revealing a non-monotonic trade-off and identifying the Synthetic Erosion 
    phenomenon through multi-metric evaluation.
    
    \item \textbf{Disentanglement-Guided Self-Alignment (DGSA).} We discover 
    that the prosody-timbre separation in Flow-Matching SLMs enables 
    self-contrastive preference construction, achieving annotation-free 
    alignment that simultaneously improves stability and expressivity.
    
    \item \textbf{Temperature-Driven Self-Critique (TDSC).} We introduce a closed-loop self-refinement mechanism for low-resource languages, which stabilizes autoregressive decoding through temperature-guided exploration and linguistic filtering.
\end{itemize}

Evaluations on Thai and Lao achieve state-of-the-art performance. Our Thai system surpasses commercial APIs including ElevenLabs in zero-shot voice cloning; our Lao system represents the first TTS capable of voice cloning for this language. These results demonstrate a promising pathway for high-fidelity synthesis in low-resource languages where baseline ASR capability is available.

\section{Stability-Expressivity Gap}
\label{sec:scaling}

\begin{figure*}[t]
\centering
\includegraphics[width=\textwidth]{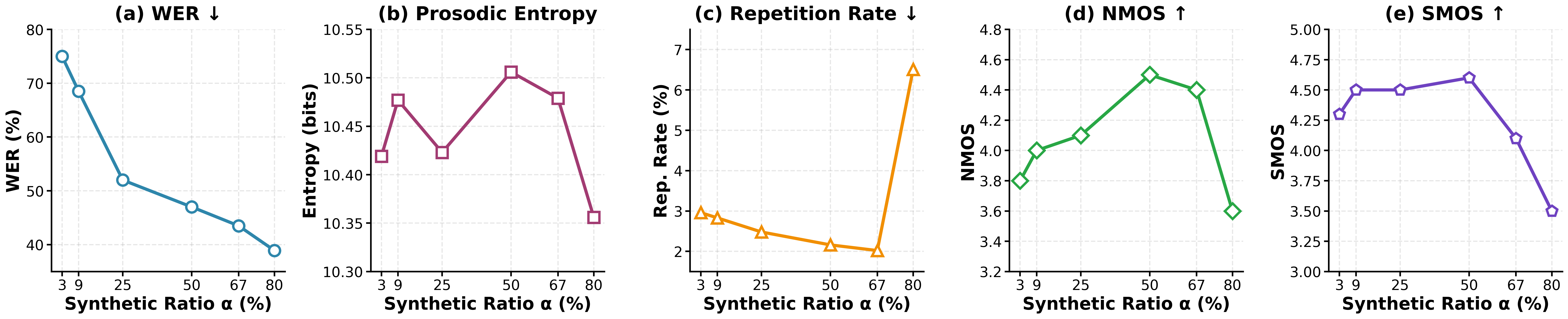}
\caption{Scaling behavior of objective metrics as synthetic data ratio $\alpha$ increases. WER decreases monotonically, indicating improved stability. In contrast, token entropy $H_p$, repetition rate, NMOS, and SMOS exhibit non-monotonic trends---peaking around $\alpha \approx 50\%$ before degrading, revealing the Synthetic Erosion phenomenon. Notably, $H_p$ tracks NMOS, supporting its use as a lightweight proxy for prosodic diversity.}
\label{fig:scaling_obj}
\end{figure*}

Given the scarcity of natural recordings, training spoken language models for low-resource languages often necessitates synthetic data augmentation~\cite{huybrechts2021low, du2020speaker, ragni2014data, rosenberg2019speech}. However, as we demonstrate empirically in this section, the benefit of synthetic data is bounded: beyond a critical ratio, continued scaling induces systematic degradation of the model's output diversity---a phenomenon we term Synthetic Erosion~\cite{shumailov2024ai, minixhofer2025scaling, alemohammad2023self}.

\paragraph{Synthetic Data Construction.}
We emphasize that all synthetic data in our framework are standard text-speech pairs $(x, y_{\text{syn}})$. The text $x$ is sourced from external corpora (multilingual C4), entirely disjoint from any real speech transcripts. The speech $y_{\text{syn}}$ is generated by off-the-shelf open-source TTS models (MMS-TTS, Seamless-M4T-v2, Typhoon2-Audio) that were \textit{not} trained on any of our data. Within the SLM training pipeline, both real and synthetic speech are mapped by the same speech tokenizer (S3Tokenizer) into a common discrete token space, and the model is trained on the resulting conditional distribution $\pi_\theta(y|x)$. Since deterministic TTS typically produces flatter token distributions than human speech, the mixed distribution $p_\alpha$ admits the concavity-based analysis below at the token level. This cross-modal construction differs from prior synthetic data studies in the image/text domain~\cite{shumailov2023curse, alemohammad2023self}, where synthetic erosion arises from iterative self-consumption; in our setting, the erosion stems from the low-entropy bias of external TTS outputs rather than recursive self-generation. For Thai, training data and the TSynC-2~\cite{wutiwiwatchaitsync} test set are strictly disjoint---no utterances, speakers, or transcripts are shared. Detailed data statistics are provided in Appendix~\ref{app:data}.

\paragraph{Problem Formulation.}

We consider autoregressive speech synthesis under mixed supervision. Let $\pi_\theta$ denote a policy that generates a sequence of discrete speech tokens $y \in \mathcal{V}^T$ conditioned on input text $x$~\cite{lakhotia2021generative, wang2023neural}:
\begin{equation}
\pi_\theta(y \mid x) = \prod_{t=1}^{T} \pi_\theta(y_t \mid y_{<t}, x)
\label{eq:generation}
\end{equation}
The training corpus $\mathcal{D} = \mathcal{D}_{\text{real}} \cup \mathcal{D}_{\text{syn}}$ consists of pairs $(x, y)$ combining authentic human recordings $\mathcal{D}_{\text{real}}$ with synthetic speech $\mathcal{D}_{\text{syn}}$. We parameterize the data composition by the synthetic ratio $\alpha = |\mathcal{D}_{\text{syn}}| / |\mathcal{D}|$ and train via maximum likelihood:
\begin{equation}
\mathcal{L}(\theta) = -\mathbb{E}_{(x,y) \sim \mathcal{D}} \bigl[ \log \pi_\theta(y \mid x) \bigr]
\label{eq:training}
\end{equation}
For low-resource languages such as Thai and Lao, data scarcity often necessitates $\alpha > 0.8$~\cite{rosenberg2019speech,thai2019synthetic}. The central question is: how does $\alpha$ affect the expressivity of the learned policy.

\paragraph{Token Entropy as a Diagnostic Signal.}

The gold standard for evaluating prosodic quality is human judgment, typically measured via Naturalness Mean Opinion Score (NMOS)~\cite{streijl2016mean}. However, NMOS evaluation is expensive and cannot be computed during training. We therefore seek a lightweight, automatically computable proxy that correlates with perceptual naturalness.

We propose monitoring token-level entropy as prosodic entropy over generated utterances:
\begin{equation}
H_p = -\sum_{v \in \mathcal{V}} p(v) \log p(v)
\label{eq:prosodic_entropy}
\end{equation}
where $p(v)$ is the empirical frequency of token $v$ across all samples. The motivation for this choice stems from the architectural design of modern Flow-Matching SLMs~\cite{mehta2024matcha, le2023voicebox}. As established by CosyVoice~\citep{du2024cosyvoice} and Vevo~\citep{zhang2025vevo}, autoregressive tokens encode content and prosody (pitch contours, rhythm, emphasis), while the Flow-Matching decoder separately controls speaker timbre via independent embeddings~\cite{ju2024naturalspeech}. This architectural decoupling implies that token-level statistics primarily reflect prosodic variation rather than speaker identity.

We emphasize that $H_p$ is a diagnostic signal for distributional diversity. Its validity as a proxy is established empirically: as shown in Figure~\ref{fig:scaling_obj} and detailed in Section~\ref{sec:exp_scaling}, $H_p$ exhibits the same non-monotonic trend as NMOS across all synthetic ratios, with both metrics peaking near $\alpha \approx 50\%$. This tight correspondence---observed consistently across experimental conditions---supports the use of $H_p$ as a lightweight indicator of prosodic richness when human evaluation is impractical.

\begin{figure*}[t]
  \centering
  \includegraphics[width=0.95\textwidth]{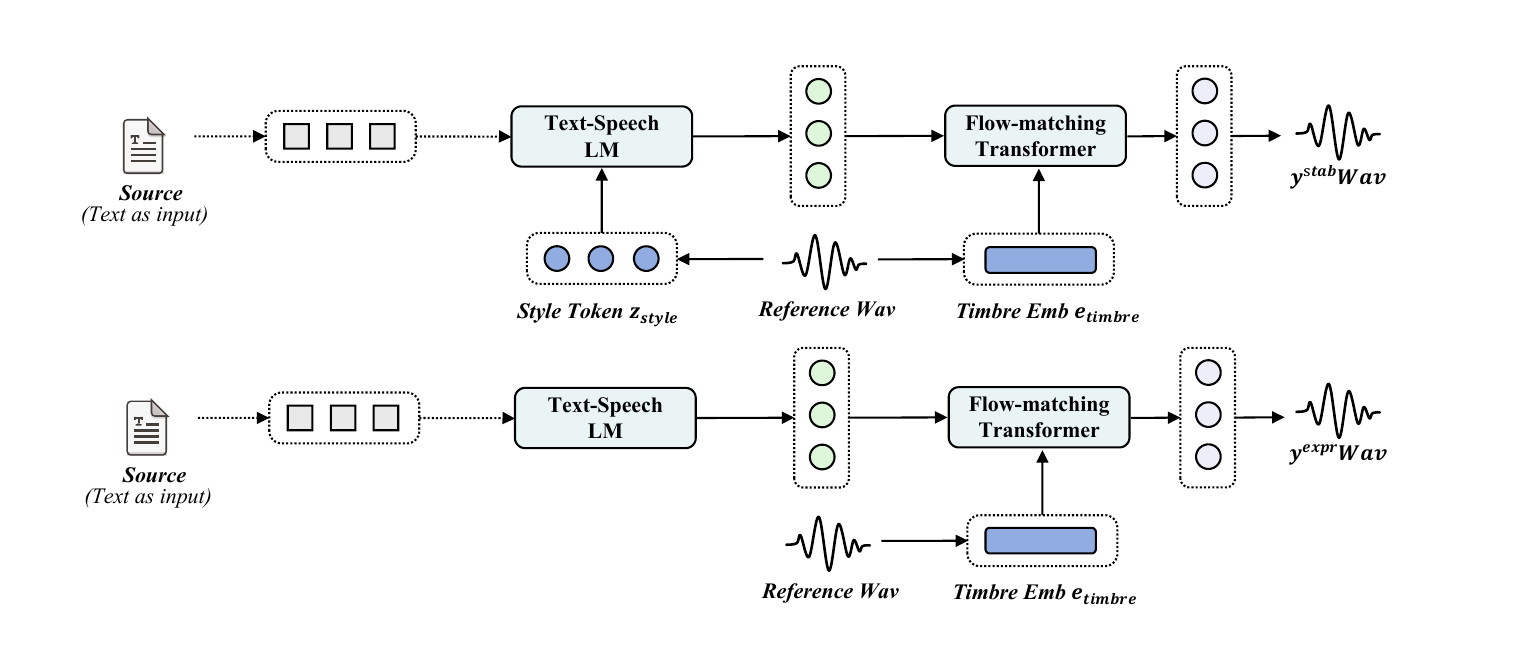} 
  \caption{\textbf{Disentanglement-Guided Self-Alignment (DGSA).} 
  Flow-Matching SLMs separate prosody (Text-Speech LM) from timbre 
  (Flow-Matching Transformer). Enabling the style token produces 
  expressive output $y^{\text{expr}}$; disabling it yields stable 
  but flat output $y^{\text{stab}}$. DGSA aligns both toward real 
  speech $y^{\text{real}}$ via dual preference objectives.}
  \label{fig:main_architecture}
\end{figure*}

\paragraph{Empirical Characterization of Synthetic Erosion.}

We train models with fixed real data while varying synthetic data across $\alpha \in \{3\%, 9\%, 25\%, 50\%, 67\%, 80\%\}$, and observe a consistent non-monotonic pattern (Figure~\ref{fig:scaling_obj}): WER decreases monotonically with $\alpha$, yet $H_p$, repetition rate, NMOS, and SMOS all peak near $\alpha \approx 50\%$ before degrading. This reveals that stability and expressivity decouple beyond a critical ratio---improving one necessarily sacrifices the other under naive data scaling.

To provide intuition for this behavior, we model the effective training distribution as a mixture:
\begin{equation}
p_\alpha = (1-\alpha) \cdot p_{\text{real}} + \alpha \cdot p_{\text{syn}}
\label{eq:mixture}
\end{equation}
Deterministic TTS engines produce outputs with substantially lower variation than human speech~\cite{ren2021fastspeech}, i.e., $H(p_{\text{syn}}) < H(p_{\text{real}}) - \delta$ for some $\delta > 0$. A classical property of mixture distributions is that $H(p_\alpha)$ is strictly concave in $\alpha$ when $p_{\text{real}} \neq p_{\text{syn}}$, implying the existence of a unique peak $\alpha^*$ with the two-phase structure (see Appendix~\ref{app:proof} for derivation):
\begin{equation}
\frac{dH(\alpha)}{d\alpha} 
\begin{cases}
> 0 & \alpha < \alpha^* \quad \textit{(Diversity Increase)} \\[2pt]
< 0 & \alpha > \alpha^* \quad \textit{(Synthetic Erosion)}
\end{cases}
\label{eq:two_phase}
\end{equation}
At low $\alpha$, synthetic data introduces token patterns absent from limited real data, increasing overall diversity. Beyond $\alpha^*$, the low-entropy synthetic distribution dominates and diversity monotonically decreases. We stress that this mixture analysis serves as a qualitative explanatory framework: it describes the training data distribution rather than the learned model's output, and the actual $\alpha^*$ is dataset-specific. Nevertheless, the predicted non-monotonicity aligns closely with our empirical observations, providing useful intuition for why naive scaling fails and motivating the alignment-based corrections we propose next.

\section{Disentanglement-Guided Self-Alignment}
\label{sec:dgsa}

The scaling behavior in Section~\ref{sec:scaling} locks practitioners 
into suboptimal configurations: reducing $\alpha$ sacrifices phonetic 
coverage~\cite{le2023voicebox}, while increasing it induces Synthetic 
Erosion~\cite{shumailov2023curse}. We break this constraint through a 
self-alignment framework that exploits the architectural properties of 
Flow-Matching SLMs~\cite{mehta2024matcha, guan2025univoice}, enabling 
preference-based correction without human annotation~\cite{liu2025direct, zhang2025multi, zhou2024beyond}.

\paragraph{Architectural Foundation.}
Flow-Matching SLMs decompose generation into two independent 
pathways~\cite{du2024cosyvoice, zhang2025vevo}: the Text-Speech LM 
produces tokens encoding content and prosody, optionally conditioned 
on a style prefix $z_{\text{style}}$; the Flow-Matching Transformer 
converts tokens to waveforms using timbre embeddings $e_{\text{timbre}}$. 
These signals operate independently---we can toggle prosodic guidance 
while preserving speaker identity, enabling controlled generation of 
outputs that isolate specific attributes.

\paragraph{Dual-Mode Generation.}
For text $x$ with real speech $y^{\text{real}}$, we generate two 
complementary outputs:
\begin{align}
y^{\text{expr}} &= \pi_\theta(x \mid z_{\text{style}}, e_{\text{timbre}}), 
\label{eq:y_expr} \\
y^{\text{stab}} &= \pi_\theta(x \mid \varnothing, e_{\text{timbre}}).
\label{eq:y_stab}
\end{align}
The expressive output $y^{\text{expr}}$ inherits prosodic variation 
but may accumulate phonetic errors; the stable output $y^{\text{stab}}$ 
is phonetically consistent but prosodically flat.

\paragraph{Dual-Objective Alignment.}
Real speech exhibits both stability and expressivity. We construct 
two preference sets aligning each mode toward $y^{\text{real}}$:
\begin{align}
\mathcal{T}_{\text{stab}} &= \{(x,\, y^{\text{real}},\, y^{\text{expr}})\}, 
\label{eq:T_stab} \\
\mathcal{T}_{\text{expr}} &= \{(x,\, y^{\text{real}},\, y^{\text{stab}})\}.
\label{eq:T_expr}
\end{align}
$\mathcal{T}_{\text{stab}}$ teaches that real speech is preferred over 
expressive-but-erroneous outputs; $\mathcal{T}_{\text{expr}}$ teaches 
that real speech is preferred over stable-but-flat outputs. Both share 
$y^{\text{real}}$ as the positive example but target different failure modes.

The combined loss is:
\begin{equation}
\mathcal{L}_{\text{DGSA}} 
= \lambda_s \mathcal{L}_{\text{DPO}}(\mathcal{T}_{\text{stab}}) 
+ \lambda_e \mathcal{L}_{\text{DPO}}(\mathcal{T}_{\text{expr}}),
\label{eq:dgsa_loss}
\end{equation}
where each DPO term follows the standard formulation~\cite{rafailov2023direct}:
\begin{align}
\mathcal{L}_{\text{DPO}}(\mathcal{T}) 
&= -\mathbb{E}_{(x,y^+,y^-)}\bigl[\log\sigma(\beta\,\Delta_\theta)\bigr], 
\label{eq:dpo_loss_1} \\
\Delta_\theta 
&= \log\frac{\pi_\theta(y^+|x)}{\pi_{\text{ref}}(y^+|x)} 
 - \log\frac{\pi_\theta(y^-|x)}{\pi_{\text{ref}}(y^-|x)}.
\label{eq:delta_1}
\end{align}
Here $\pi_{\text{ref}}$ is the frozen SFT policy.

\begin{figure*}[t]
\centering
\includegraphics[width=0.85\textwidth]{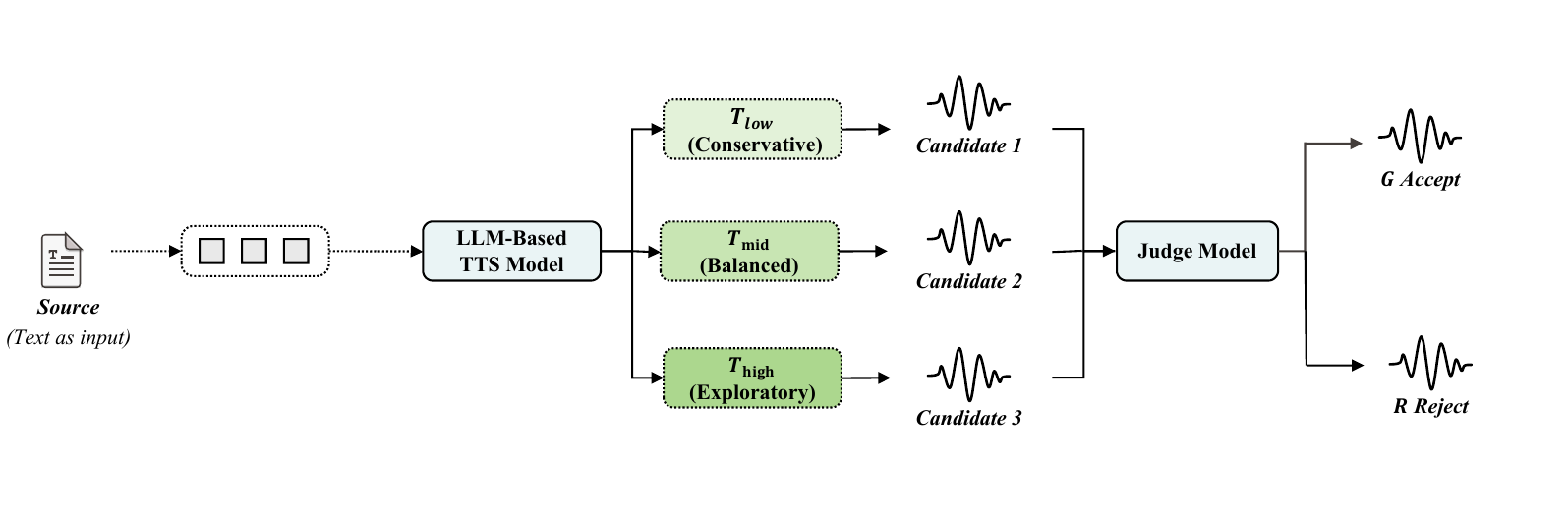}
\caption{\textbf{Temperature-Driven Self-Critique (TDSC)} For each input, the model generates candidates 
at temperatures $T \in \{low, mid, high\}$, spanning conservative (stable) to 
exploratory (expressive) outputs. The Judge Model filters candidates by WER, 
length, and repetition criteria, yielding accepted ($\mathcal{G}$) and 
rejected ($\mathcal{R}$) sets for preference-based refinement.}
\label{fig:tdsc}
\end{figure*}

\paragraph{Stage Separation.}
We emphasize that the training pipeline is strictly sequential. \textbf{Stage~1 (SFT):} The model is fine-tuned on the mixed corpus $\mathcal{D}_{\text{real}} \cup \mathcal{D}_{\text{syn}}$ via maximum likelihood (Eq.~\ref{eq:training}). \textbf{Stage~2 (Generation):} The SFT checkpoint is \textit{frozen}, and $y^{\text{expr}}$, $y^{\text{stab}}$ are generated from this frozen model. These outputs never enter the SFT objective. \textbf{Stage~3 (DGSA Alignment):} DPO is applied on the constructed preference triplets starting from the same SFT checkpoint. This design ensures that the SFT baseline and the DGSA model share exactly the same Stage-1 training; the comparison in Section~\ref{sec:exp_dgsa} isolates the effect of alignment alone.

\paragraph{Dynamic Weight Scheduling.}
To counter synthetic erosion without destabilizing training, we employ a dynamic crossover schedule determined by the critical ratio $\alpha^*$. Below this threshold, we prioritize stability ($\lambda_s=1$). Beyond it, we linearly ramp up the expressivity weight $\lambda_e$ proportional to the excess synthetic data, while correspondingly reducing $\lambda_s$ to rebalance the objective:
\begin{align}
\lambda_e &= \max\left(0, \frac{\alpha - \alpha^*}{1 - \alpha^*}\right), \\
\lambda_s &= 1 - \lambda_e.
\label{eq:dynamic_weights}
\end{align}
This mechanism (Figure~\ref{fig:dynamic_weight}) ensures that corrective pressure for prosodic diversity increases precisely when the training distribution becomes dominated by synthetic data.

\section{Temperature-Driven Self-Critique}
\label{sec:tdsc}

DGSA requires real speech recordings to anchor preference construction. For low-resource languages such as Lao where authentic corpora are practically inaccessible\cite{pratap2024scaling, gong2024zmm}, we introduce Temperature-Driven Self-Critique (TDSC), a closed-loop mechanism that bootstraps policy refinement from model self-evaluation alone.

\paragraph{Multi-Temperature Trajectory Exploration.}
The sampling temperature $T$ controls the entropy of autoregressive decoding\cite{holtzmancurious}. We define the temperature-scaled policy as:
\begin{equation}
\pi_\theta^{(T)}(y_t \mid y_{<t}, x) \propto \pi_\theta(y_t \mid y_{<t}, x)^{1/T}
\label{eq:temperature}
\end{equation}
Low temperatures ($T < 1$) yield stable but monotonous outputs; high temperatures ($T > 1$) enable prosodic diversity at the risk of phonetic errors\cite{wang2023neural, mayer2025investigating, ju2024naturalspeech,zhang2024amphion}. Rather than committing to a single $T$, TDSC generates multiple candidates across a temperature gradient $\mathcal{T} = \{T_{\text{low}}, T_{\text{mid}}, T_{\text{high}}\}$ for each input text $x$, producing a diverse candidate pool that spans the Stability-Expressivity spectrum\cite{mayer2025investigating, wu2025on}.

\paragraph{Self-Critique and Pair Construction.}
Lacking ground-truth references, we construct a composite judge to evaluate candidate $y$ against input $x$. We define three strict criteria:
\begin{equation}
\mathcal{C}(y) =
\begin{cases}
1 & \text{if } \left\{
 \begin{aligned}
 &\texttt{WER}(y) < \tau_w \\
 &\texttt{Rep}(y) < \tau_r \\
 &\texttt{Len}(y) \in [\gamma_{\min}|x|, \gamma_{\max}|x|]
 \end{aligned}
\right. \\
0 & \text{otherwise}
\end{cases}
\label{eq:filter}
\end{equation}
where $\texttt{Rep}(y)$ denotes the repetition rate, defined as the fraction of positions in the token sequence where $k+1$ consecutive identical tokens appear (we set $k=4$ to capture persistent loops rather than natural phonetic gemination; see Appendix~\ref{app:metrics} for the formal definition). The length bounds are dynamically scaled by the text length $|x|$ to reject distinct duration failures while accommodating natural speech rate variations. Based on $\mathcal{C}(y)$, we mine preference pairs $(y_w, y_l)$ to construct the training sets. The accepted set $\mathcal{G}^{(k)}$ consists of candidates that satisfy $\mathcal{C}(y)=1$, from which we select the sample with the lowest WER as the winner $y_w$. Crucially, to prevent the model from exploiting length heuristics during DPO, the rejected sample $y_l$ is selected from candidates that pass the length and repetition filters but exhibit high WER. This ensures the optimization focuses on phonetic accuracy rather than duration artifacts.

\paragraph{Recursive Refinement.}
TDSC operates as an iterative closed-loop. In each iteration $k$, we refine the policy $\pi_{\theta}$ through a two-stage optimization process using the filtered datasets. First, we stabilize the model by maximizing the likelihood of high-quality samples in $\mathcal{G}^{(k)}$ via Supervised Fine-Tuning (SFT):
\begin{equation}
\mathcal{L}_{\text{SFT}}(\theta) = -\mathbb{E}_{y \in \mathcal{G}^{(k)}} \bigl[ \log \pi_\theta(y \mid x) \bigr]
\label{eq:sft_loss}
\end{equation}
Subsequently, to improve discrimination, we apply Direct Preference Optimization (DPO) using pairs $(y_w, y_l)$ constructed from $\mathcal{G}^{(k)}$ and the rejected set $\mathcal{R}^{(k)}$:
\begin{align}
\mathcal{L}_{\text{DPO}}(\theta)
&= -\mathbb{E}_{(y_w,y_l)}\bigl[\log\sigma(\beta\,\Delta_\theta)\bigr],
\label{eq:dpo_loss} \\
\Delta_\theta
&= \log\frac{\pi_\theta(y_w|x)}{\pi_{\text{ref}}(y_w|x)}
- \log\frac{\pi_\theta(y_l|x)}{\pi_{\text{ref}}(y_l|x)}.
\label{eq:delta}
\end{align}
This sequential update ensures the model first consolidates its ability to generate stable speech (SFT), then learns to suppress specific failure modes like hallucinations (DPO). As the policy stabilizes, we expand the exploration space by increasing the temperature limit $T_{high}^{(k)} = T_{high}^{(0)} + \gamma \cdot k$, progressively recovering prosodic diversity.

\section{Experiments}

\subsection{Experimental Setup}\label{sec:exp_setup}
We evaluate our framework on Thai and Lao. Training data for Thai consists of 300h of ASR-filtered real speech from Common Voice and 1,200h of synthetic data, while Lao relies entirely on 1,500h of synthetic speech. We compare our system against open-source baselines (PythaiTTS\cite{phatthiyaphaibun2023pythainlp}, Typhoon2-Audio\cite{pipatanakul2024typhoon}, Seamless-M4T-v2\cite{barrault2023seamlessm4t}, MMS-TTS\cite{pratap2024scaling}) and commercial APIs (Gemini, Azure, ElevenLabs v3). To ensure reproducibility, all commercial evaluations were frozen on January 25, 2025, utilizing the widely recognized TSynC-2 \cite{wutiwiwatchaitsync} (Thai) and Common Voice\cite{ardila2020common} (Lao) corpora as benchmarks for evaluation.

The model architecture builds on CosyVoice 2\cite{du2024cosyvoice}. Performance is evaluated using objective metrics (WER, Speaker Similarity, Prosodic Entropy) and subjective Mean Opinion Score (MOS) tests. For WER calculation and data filtering, we employ distinct ASR models to ensure robust accuracy: for Thai, we utilize Whisper-large-v3~\cite{radford2023robust}; for Lao, we adopt Dolphin-small\cite{meng2025dolphin}, as empirical testing revealed it significantly outperforms Whisper in recognizing Lao. Subjective evaluation includes Naturalness MOS (NMOS) to assess prosody and Speaker Similarity MOS (SMOS) to measure identity preservation~\cite{streijl2016mean}. The evaluation follows a double-blind, randomized, within-subject design involving 20 native speakers per language. We report metrics with 95\% Confidence Intervals (CI) and conduct paired $t$-tests to verify statistical significance ($p < 0.05$). Detailed hyperparameters, including the training infrastructure, filtering thresholds for TDSC and model configurations, are provided in Appendix~\ref{app:implementation}.

\subsection{Scaling Experiments}
\label{sec:exp_scaling}

We first examine how the synthetic data ratio $\alpha$ affects model behavior, 
validating the non-monotonic pattern discussed in Section~\ref{sec:scaling}. 
We train models with fixed 300h real speech while varying synthetic data from 
10h to 1,500h, corresponding to synthetic ratios 
$\alpha \in \{3\%, 9\%, 15\%, 25\%, 40\%, 50\%, 60\%, 67\%, 80\%, 100\%\}$.

\begin{table}[t]
\centering
\caption{Scaling behavior across synthetic data ratios. Best expressivity metrics ($H_p$, NMOS, SMOS) occur at $\alpha \approx 50\%$. The $\alpha=100\%$ row (pure synthetic, 0h real) confirms severe Synthetic Erosion. Subjective metrics are reported with 95\% CI.}
\label{tab:scaling}
\small
\setlength{\tabcolsep}{3pt}
\resizebox{\columnwidth}{!}{%
\begin{tabular}{ccccccc}
\toprule
Syn. & $\alpha$ & WER$\downarrow$ & $H_p$ & Rep.$\downarrow$ & NMOS$\uparrow$ & SMOS$\uparrow$ \\
(hours) & (\%) & (\%) & (bits) & (\%) & (Mean $\pm$ CI) & (Mean $\pm$ CI) \\
\midrule
10 & 3 & 75.0 & 10.419 & 2.96 & $3.82 \pm 0.09$ & $4.31 \pm 0.08$ \\
30 & 9 & 68.5 & 10.477 & 2.83 & $4.01 \pm 0.08$ & $4.52 \pm 0.07$ \\
53 & 15 & 58.2 & 10.455 & 2.62 & $4.08 \pm 0.08$ & $4.54 \pm 0.06$ \\
100 & 25 & 52.0 & 10.423 & 2.48 & $4.13 \pm 0.07$ & $4.55 \pm 0.06$ \\
200 & 40 & 49.2 & 10.498 & 2.21 & $4.38 \pm 0.07$ & $4.60 \pm 0.06$ \\
\rowcolor{gray!15}
300 & 50 & 47.0 & \textbf{10.506} & 2.16 & $\mathbf{4.51 \pm 0.06}$ & $\mathbf{4.63 \pm 0.05}$ \\
450 & 60 & 44.8 & 10.491 & 2.08 & $4.45 \pm 0.06$ & $4.35 \pm 0.07$ \\
600 & 67 & 43.5 & 10.479 & \textbf{2.02} & $4.42 \pm 0.07$ & $4.12 \pm 0.08$ \\
1200 & 80 & 38.9 & 10.356 & 6.51 & $3.61 \pm 0.09$ & $3.54 \pm 0.10$ \\
1500 & 100 & \textbf{36.2} & 10.210 & 9.83 & $3.08 \pm 0.10$ & $3.01 \pm 0.11$ \\
\bottomrule
\end{tabular}
}
\end{table}

\begin{figure}[t]
\centering
\includegraphics[width=0.75\columnwidth]{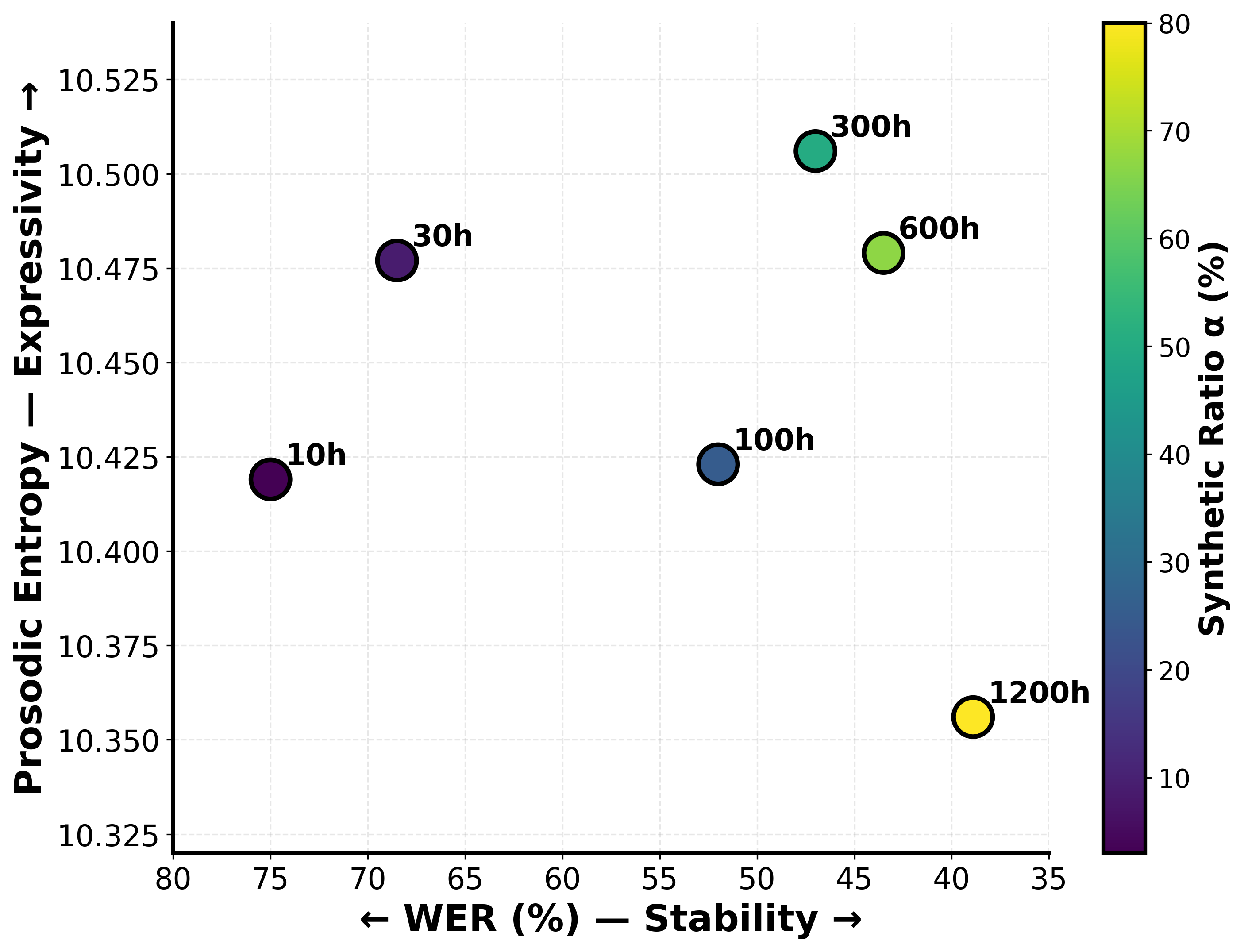}
\caption{Stability-Expressivity trade-off space. Each point represents a model 
trained with different synthetic data ratios. Lower WER (rightward) indicates 
better stability; higher $H_p$ (upward) indicates better expressivity. The 300h 
configuration achieves the best balance, while excessive synthetic data (1200h, 1500h) 
sacrifices expressivity for marginal stability gains.}
\label{fig:scaling_tradeoff}
\end{figure}

\paragraph{Phase I: Diversity Increase ($\alpha < 50\%$).}
As shown in Table~\ref{tab:scaling} and Figure~\ref{fig:scaling_obj}, increasing 
synthetic data initially benefits both stability and expressivity. WER drops 
from 75\% to 47\% as phonetically consistent synthetic supervision regularizes 
training. Concurrently, token entropy $H_p$ rises from 10.42 to 10.51 bits, and 
repetition rate decreases from 2.96\% to 2.16\%, indicating that the model 
escapes underfitting and explores richer output trajectories. Subjective scores 
confirm this trend: NMOS improves from 3.8 to 4.5, and SMOS from 4.3 to 4.6.

\paragraph{Phase II: Synthetic Erosion ($\alpha > 50\%$).}
Beyond $\alpha \approx 50\%$, a striking divergence emerges 
(Figure~\ref{fig:scaling_tradeoff}). WER continues to improve (47\% $\to$ 36\%), 
yet all expressivity metrics degrade. Token entropy $H_p$ decays from 10.51 to 
10.21 bits at $\alpha = 100\%$, consistent with the two-phase structure in Eq.~\eqref{eq:two_phase}. 
Most dramatically, repetition rate surges from 2.02\% to 9.83\% at 
$\alpha = 100\%$, signaling distributional collapse toward repetitive patterns. 
Subjective quality suffers correspondingly: NMOS drops from 4.5 to 3.1, and 
SMOS from 4.6 to 3.0---both falling well below the $\alpha = 3\%$ baseline despite 
superior WER. Notably, the denser scaling points reveal that SMOS degrades 
earlier than NMOS ($\alpha = 60\%$: SMOS 4.35 vs.\ NMOS 4.45), suggesting 
that speaker identity preservation is more sensitive to synthetic dominance 
than prosodic naturalness. The pure-synthetic regime ($\alpha = 100\%$) is 
consistent with the Lao results (Table~\ref{tab:tdsc_main}, NMOS = 3.12, 
$H_p$ = 10.08), providing cross-language validation that Synthetic Erosion 
is a general phenomenon rather than a Thai-specific artifact.

\paragraph{Validating Token Entropy as a Prosodic Indicator.}
We validate $H_p$ as a proxy for prosody based on the architectural decoupling in Flow-Matching SLMs, where AR tokens encode prosody while the decoder handles timbre~\cite{zhang2025vevo,du2024cosyvoice}. To isolate prosodic variation from stability and content, we conduct a controlled study using 2,000 sample pairs. Crucially, each pair is generated from the identical text and matched for intelligibility (WER $\in [35\%, 42\%]$), but contrasted by entropy ($H_p$ top vs.\ bottom quartile). As shown in Table~\ref{tab:hp_validation}, high-$H_p$ samples exhibit richer pitch dynamics (F0 range/std), stronger F0 correlation with ground-truth recordings, and greater energy variation despite comparable WER. Higher human ratings (4.2 vs.\ 3.7 MOS) confirm that $H_p$ captures perceptually meaningful expressivity rather than generation noise.

\subsection{DGSA Evaluation}
\label{sec:exp_dgsa}

We evaluate DGSA at $\alpha = 80\%$, where Synthetic Erosion is most severe. 
We compare against the SFT Baseline, Standard DPO (single expressivity objective), and Rejection 
Sampling (inference-time filtering).

\begin{table}[h]
\centering
\caption{Controlled comparison of high vs.\ low $H_p$ samples paired by identical text and matched WER. Improvements in acoustic features and perceived expressivity ($p < 0.01$) validate $H_p$ as a prosodic indicator.}
\label{tab:hp_validation}
\small
\setlength{\tabcolsep}{4pt}
\begin{tabular}{lccc}
\toprule
\textbf{Metric} & \textbf{High $H_p$} & \textbf{Low $H_p$} & \textbf{$\Delta$} \\
\midrule
WER (\%) $\downarrow$ & 38.2 & 38.8 & --0.6 {\scriptsize (n.s.)} \\
\midrule
F0 Std.\ (Hz) $\uparrow$ & 42.6 & 35.8 & +6.8 \\
F0 Range (Hz) $\uparrow$ & 128.4 & 109.7 & +18.7 \\
F0 Corr.\ $\uparrow$ & 0.68 & 0.52 & +0.16 \\
F0 RMSE (Hz) $\downarrow$ & 28.3 & 39.1 & --10.8 \\
Energy Std.\ (dB) $\uparrow$ & 7.4 & 6.1 & +1.3 \\
\midrule
\textit{Expressivity MOS} $\uparrow$ & 4.2 & 3.7 & +0.5 \\
\bottomrule
\end{tabular}
\end{table}

\begin{table}[t]
\centering
\caption{Alignment methods comparison at $\alpha=80\%$. DGSA simultaneously achieves high expressivity and stability. Intervals denote 95\% CI.}
\label{tab:dgsa_main}
\small
\resizebox{\columnwidth}{!}{
\begin{tabular}{lccccc}
\toprule
Method & WER$\downarrow$ & $H_p$$\uparrow$ & Rep.$\downarrow$ & NMOS$\uparrow$ & SMOS$\uparrow$ \\
\midrule
SFT Baseline & \textbf{38.9} & 10.36 & 6.51 & $3.61 \pm 0.09$ & $3.54 \pm 0.10$ \\
Standard DPO & 45.2 & 10.49 & 4.08 & $3.92 \pm 0.08$ & $3.81 \pm 0.09$ \\
Rejection Sampling & 40.5 & 10.41 & 5.18 & $3.75 \pm 0.08$ & $3.66 \pm 0.09$ \\
\textbf{DGSA (Ours)} & \textbf{38.9} & \textbf{10.52} & \textbf{2.82} & $\mathbf{4.42 \pm 0.07}$ & $\mathbf{4.53 \pm 0.06}$ \\
\bottomrule
\end{tabular}
}
\end{table}

\paragraph{Main Results.}
Table~\ref{tab:dgsa_main} demonstrates that DGSA successfully closes the
Stability-Expressivity Gap. It maintains the rigorous stability of the SFT
baseline (WER: 38.9\% for both) while substantially recovering expressivity
($H_p$ improves from 10.36 to 10.52 bits; NMOS from 3.6 to 4.4).
In contrast, Standard DPO improves $H_p$ but degrades WER to 45.2\%,
confirming that single-objective alignment without architectural guidance
sacrifices phonetic accuracy. Rejection Sampling provides only marginal
gains over SFT. By effectively decoupling the two objectives, DGSA achieves
the best trade-off, combining the stability of supervised learning with
rich prosodic diversity.

\paragraph{Why $\alpha = 80\%$?}
By design, DGSA applies zero correction at $\alpha \leq \alpha^*$. Our dynamic scheduling (Eq.~\ref{eq:dynamic_weights}) gives $\lambda_e = 0$ when $\alpha = 50\%$, so DGSA reduces exactly to SFT at this ratio. This is intentional: Synthetic Erosion has not yet emerged at $\alpha = 50\%$, so no corrective pressure is needed. DGSA targets the high-$\alpha$ regime (70--90\%) where practitioners are forced by data scarcity---here the model has better pronunciation (WER 38.9\% vs.\ 47.0\% at $\alpha = 50\%$) but suffers expressivity collapse (NMOS 3.61 vs.\ 4.51), which DGSA restores to 4.42.

\paragraph{Ablation Study.}
Full results (Appendix~\ref{app:ablation_dgsa}) confirm the Expressivity Objective is paramount; its removal yields flat outputs with the largest quality drop ($\Delta$NMOS = $-$0.7). Conversely, omitting the Stability Objective causes a spike in WER. Crucially, replacing Identity-Consistent Pairs with random pairing degrades both objectives, validating the necessity of prosody-timbre disentanglement.

\begin{figure}[t]
\centering
\includegraphics[width=\columnwidth]{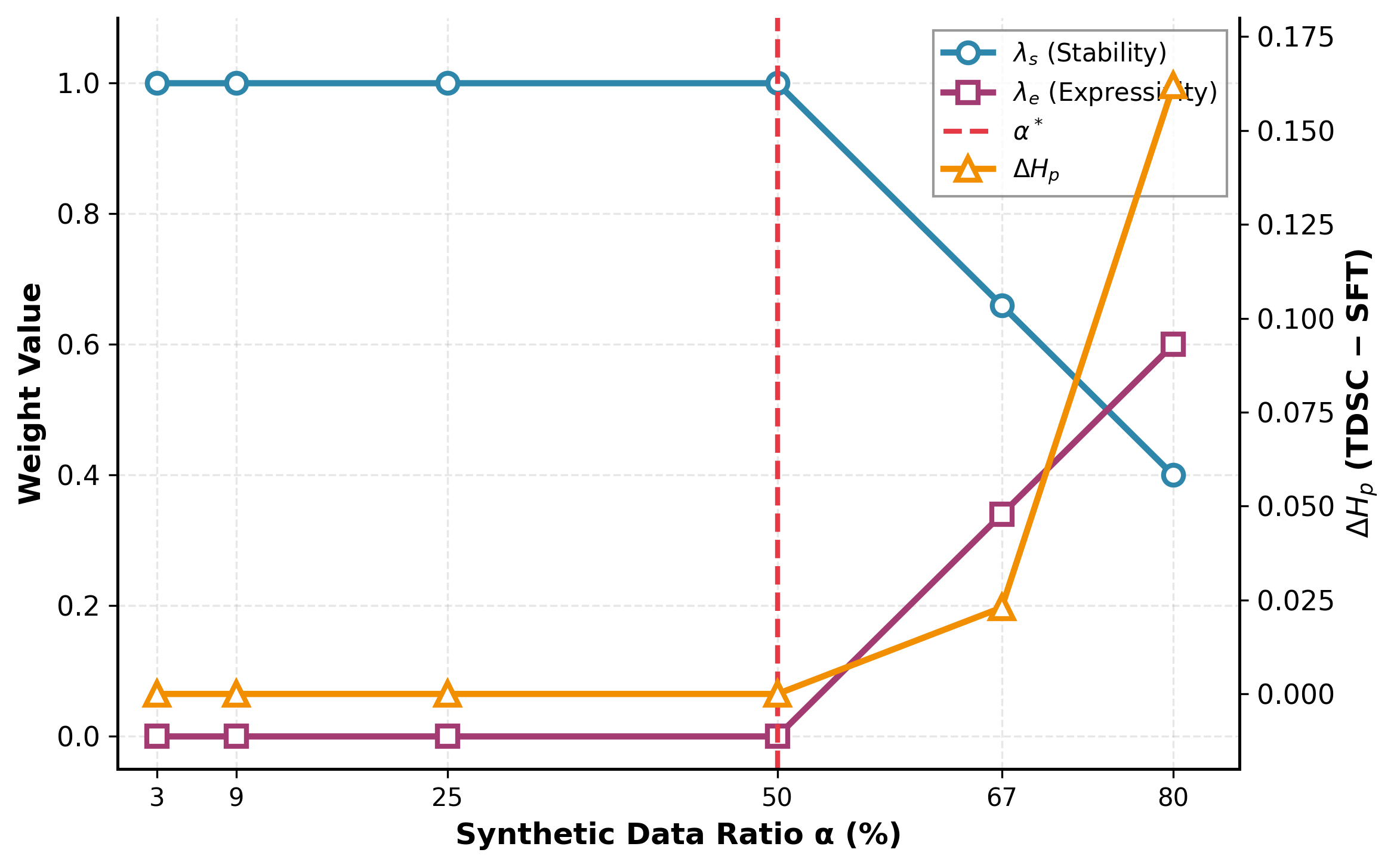}
\caption{Dynamic weight scheduling and $H_p$ recovery. Below 
$\alpha^* = 50\%$, $\lambda_e = 0$ (no correction needed). Beyond 
$\alpha^*$, $\lambda_e$ activates and $\Delta H_p$ scales proportionally.}
\label{fig:dynamic_weight}
\end{figure}

\begin{figure}[t]
\centering
\includegraphics[width=\columnwidth]{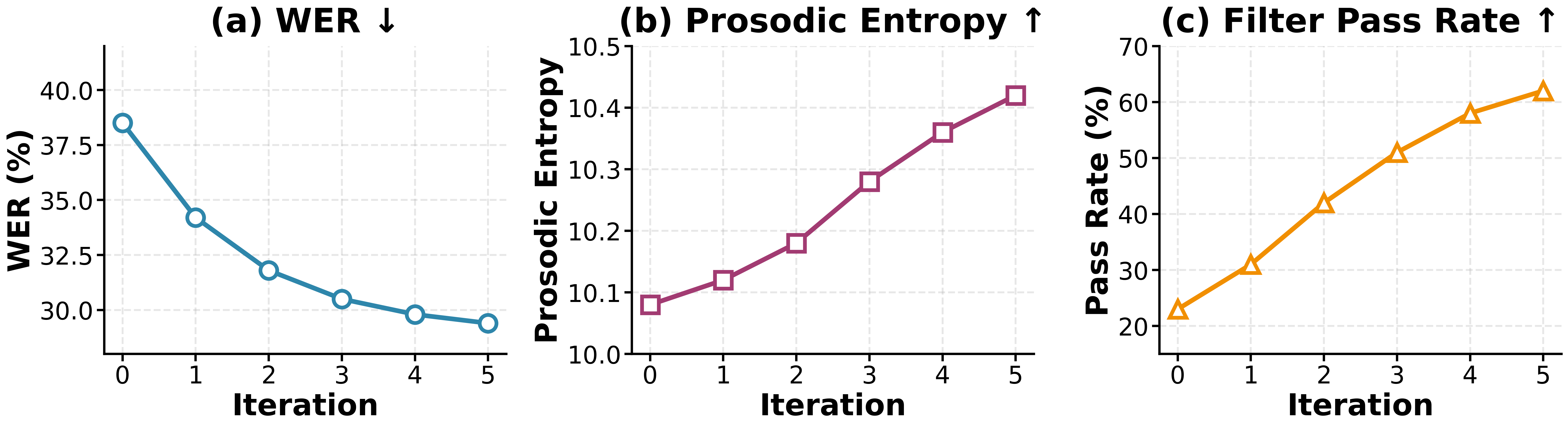}
\caption{TDSC iteration dynamics over 5 refinement rounds. WER decreases steadily while $H_p$ rises in later iterations as $T_{\max}$ expands. Pass rate increases from 23\% to 62\%, indicating progressive quality improvement.}
\label{fig:tdsc_iteration}
\end{figure}

\paragraph{Dynamic Weight Behavior.}
Figure~\ref{fig:dynamic_weight} illustrates the adaptive trade-off in our scheduling. Below the critical threshold $\alpha^* = 50\%$, the system prioritizes pure stability ($\lambda_s=1.0$), keeping the expressivity weight dormant ($\lambda_e=0$). As synthetic erosion emerges beyond $\alpha^*$, the mechanism triggers a linear crossover: $\lambda_e$ scales up to inject prosodic diversity while $\lambda_s$ is correspondingly reduced. This rebalancing drives a sharp recovery in prosodic entropy ($\Delta H_p$), confirming that DGSA applies corrective pressure on-demand, strictly when the distribution begins to collapse.

\subsection{TDSC Evaluation}
\label{sec:exp_tdsc}

We evaluate TDSC on Lao, a low-resource language with exceptionally limited real speech corpus. The model is trained entirely on 1,500h of synthetic data generated via cross-lingual transfer. We compare against alternative self-improvement strategies: Self-Training (iterative pseudo-labeling) and Rejection Sampling (inference-time filtering).

\begin{table}[t]
\centering
\caption{TDSC evaluation on Lao. TDSC significantly outperforms alternative self-improvement methods ($p < 0.05$).}
\label{tab:tdsc_main}
\small
\setlength{\tabcolsep}{4pt}
\begin{tabular}{lcccc}
\toprule
\textbf{Method} & \textbf{WER}$\downarrow$ & \textbf{$H_p$}$\uparrow$ & \textbf{Rep.}$\downarrow$ & \textbf{NMOS}$\uparrow$ \\
 & (\%) & (bits) & (\%) & (Mean $\pm$ CI) \\
\midrule
SFT Baseline & 38.5 & 10.08 & 7.62 & $3.12 \pm 0.09$ \\
Self-Training & 35.2 & 10.15 & 6.94 & $3.31 \pm 0.08$ \\
Rejection Sampling & 36.8 & 10.11 & 7.28 & $3.24 \pm 0.09$ \\
\textbf{TDSC (Ours)} & \textbf{29.8} & \textbf{10.42} & \textbf{4.15} & $\mathbf{3.94 \pm 0.07}$ \\
\bottomrule
\end{tabular}
\end{table}

\paragraph{Main Results.}
Table~\ref{tab:tdsc_main} compares TDSC against alternative self-improvement strategies. Starting from the same SFT baseline trained on synthetic data, TDSC achieves substantial gains: WER decreases by 24\% relative (38.5\% $\to$ 29.8\%), repetition rate drops by 46\% (7.62\% $\to$ 4.15\%), and NMOS improves by 0.8 points (3.1 $\to$ 3.9). In contrast, Self-Training provides only modest improvements (WER: 35.2\%, NMOS: 3.3) and plateaus after few iterations due to confirmation bias. Rejection Sampling offers minimal benefit over SFT, as inference-time filtering cannot improve the underlying policy.

\paragraph{Iteration Dynamics.}
Figure~\ref{fig:tdsc_iteration} visualizes the closed-loop refinement process. WER decreases rapidly in early iterations (38.5\% $\to$ 31.8\% by $k=2$) as the model learns from filtered high-quality samples, then converges gradually to 29.8\%. Prosodic entropy $H_p$ exhibits a two-phase pattern: it remains stable during early iterations when $T_{\max}$ is conservative (0.8--1.0), then rises from 10.18 to 10.42 as the temperature curriculum expands to $T_{\max} = 1.3$, enabling greater prosodic exploration. The pass rate increases from 23\% to 62\%, confirming that generation quality improves with each iteration. This synchronized behavior validates the curriculum design: TDSC first establishes phonetic stability, then progressively recovers expressivity.

\paragraph{Ablation Study.}
Full ablation results (Appendix~\ref{app:ablation_tdsc}) identify DPO loss as the primary driver of naturalness, causing the largest degradation ($\Delta$NMOS = $-$0.5) upon removal. Multi-Temperature exploration proves vital for prosodic richness ($H_p$), while the Temperature Curriculum prevents premature caps on expressivity. Finally, analysis confirms that our filter effectively aggregates complementary samples from stable and expressive regimes, balancing the trade-off.

\subsection{Comparison with Existing Systems}
\label{sec:exp_comparison}

We compare our full system against open-source and commercial TTS systems 
on both Thai and Lao. For Thai, we apply DGSA ($\alpha = 80\%$); for Lao, 
we use TDSC. Evaluation is conducted on held-out TSynC-2 for Thai, Common Voice for Lao. We consider two tasks: standard TTS common text-to-speech and zero-shot voice cloning 
(reproducing a target speaker from a short reference). For standard TTS, 
we report WER and NMOS. For voice cloning, we additionally report speaker 
similarity (SIM) and speaker MOS (SMOS).

\begin{table}[t]
\centering
\caption{Standard TTS comparison on Thai and Lao. Our system achieves the best expressivity (NMOS) with statistical significance ($p < 0.05$) against baselines.}
\label{tab:comparison_tts}
\small
\resizebox{\columnwidth}{!}{%
    \setlength{\tabcolsep}{2.5pt} 
    \begin{tabular}{llcc}
    \toprule
    \textbf{Language} & \textbf{Method} & \textbf{WER}$\downarrow$ (\%) & \begin{tabular}[c]{@{}c@{}}\textbf{NMOS}$\uparrow$ \\ (Mean $\pm$ CI)\end{tabular} \\
    \midrule
    \multirow{9}{*}{Thai} 
    & \multicolumn{3}{l}{\textit{Open-Source}} \\
    & PyThaiTTS & 78.4 & $2.91 \pm 0.10$ \\
    & MMS-TTS & 53.5 & $3.24 \pm 0.09$ \\
    & Typhoon2-Audio & 50.9 & $3.72 \pm 0.08$ \\
    & Seamless-M4T-v2 & 47.8 & $3.55 \pm 0.09$ \\
    \cmidrule{2-4}
    & \multicolumn{3}{l}{\textit{Commercial}} \\
    & ElevenLabs-v3 & 40.6 & $4.21 \pm 0.07$ \\
    & Gemini Flash & 40.2 & $3.93 \pm 0.08$ \\
    & Gemini Pro & 41.9 & $4.05 \pm 0.07$ \\
    & Microsoft Azure & \textbf{36.5} & $4.01 \pm 0.07$ \\
    \cmidrule{2-4}
    & \textbf{Ours (DGSA)} & 38.9 & $\mathbf{4.51 \pm 0.06}$ \\
    \midrule
    \multirow{6}{*}{Lao} 
    & \multicolumn{3}{l}{\textit{Open-Source}} \\
    & MMS-TTS & 44.8 & $3.52 \pm 0.09$ \\
    \cmidrule{2-4}
    & \multicolumn{3}{l}{\textit{Commercial}} \\
    & Microsoft Azure & 41.8 & $3.91 \pm 0.08$ \\
    & Gemini Flash & 34.2 & $4.12 \pm 0.07$ \\
    & Gemini Pro & 35.6 & $4.10 \pm 0.07$ \\
    \cmidrule{2-4}
    & \textbf{Ours (TDSC)} & \textbf{29.8} & $\mathbf{4.53 \pm 0.06}$ \\ 
    \bottomrule
    \end{tabular}%
}
\end{table}

\paragraph{Standard TTS Results.}
Table~\ref{tab:comparison_tts} presents the standard TTS comparison. 
On Thai, our DGSA model achieves the highest NMOS (4.5), outperforming 
all commercial systems including ElevenLabs-v3 (NMOS: 4.2) and Azure TTS 
(NMOS: 4.0). While Azure achieves slightly lower WER (36.5\% vs.\ 38.9\%), 
this marginal stability gain comes at significant expressivity cost 
(NMOS 4.0 vs.\ 4.5), illustrating the Stability-Expressivity trade-off 
that our method addresses. The gap over open-source models is substantial: 
Typhoon2-Audio, the strongest open-source baseline, achieves only 50.9\% 
WER and 3.7 NMOS. 

On Lao, despite training with zero real speech, our TDSC model achieves 
both the lowest WER (29.8\%) and highest NMOS (4.5)---surpassing the 
best commercial system (Gemini Flash) by 4.4\% absolute WER and 0.4 NMOS 
points. The performance gap over MMS-TTS (44.8\% WER, 3.5 NMOS), the only open-source system with Lao 
support, demonstrates the effectiveness of our synthetic-data framework combined with self-improvement techniques.

\begin{table}[t]
\centering
\caption{Zero-shot voice cloning comparison. Our method outperforms baselines in speaker similarity and naturalness.}
\label{tab:comparison_cloning}
\small
\resizebox{\columnwidth}{!}{
\begin{tabular}{llcccc}
\toprule
\textbf{Lang.} & \textbf{Method} & \textbf{WER}$\downarrow$ & \textbf{SIM}$\uparrow$ & \textbf{NMOS}$\uparrow$ & \textbf{SMOS}$\uparrow$ \\
 & & (\%) & (0-1) & (Mean $\pm$ CI) & (Mean $\pm$ CI) \\
\midrule
\multirow{2}{*}{Thai}
& ElevenLabs-v3 & 42.3 & 0.78 & $4.21 \pm 0.07$ & $4.23 \pm 0.07$ \\
& \textbf{Ours (DGSA)} & \textbf{38.9} & \textbf{0.84} & $\mathbf{4.42 \pm 0.06}$ & $\mathbf{4.51 \pm 0.06}$ \\
\midrule
\multirow{2}{*}{Lao}
& Other Systems & \multicolumn{4}{c}{\textit{Not Supported}} \\
& \textbf{Ours (TDSC)} & \textbf{29.8} & \textbf{0.81} & $\mathbf{3.94 \pm 0.07}$ & $\mathbf{4.32 \pm 0.06}$ \\
\bottomrule
\end{tabular}
}
\end{table}

\paragraph{Zero-Shot Voice Cloning Results.} Table~\ref{tab:comparison_cloning} details performance on Thai and Lao. For Thai, our DGSA model outperforms ElevenLabs-v3, the only capable baseline, across all metrics by achieving superior intelligibility (WER 38.9\% vs.\ 42.3\%) and speaker resemblance (SMOS 4.5 vs.\ 4.2). Regarding Lao, our system represents the only model capable of zero-shot cloning. Despite the heavy reliance on synthetic training data, TDSC achieves high fidelity, with a SMOS of 4.3 and a SIM of 0.81, which demonstrates that effective identity preservation is achievable without authentic target-language recordings. 

\section{Conclusion}
\label{sec:conclusion}

In this work, we validate that fine-tuning expressive SLM backbones with flat synthetic data effectively extends high-fidelity synthesis to low-resource languages. Meanwhile, we demonstrate that this paradigm is constrained by a Stability-Expressivity Gap, where excessive synthetic ratios trigger Synthetic Erosion, a systematic collapse of the output distribution. To break this trade-off, we propose Disentanglement-Guided Self-Alignment (DGSA), which exploits the prosody-timbre separation in Flow-Matching SLMs to construct self-contrastive preference pairs. Furthermore, for low-resource Languages like Lao, we enable a robust pure-synthetic pipeline using Temperature-Driven Self-Critique (TDSC), stabilizing autoregressive decoding via temperature-guided exploration. Our approach achieves state-of-the-art results on Thai and Lao, surpassing commercial systems in zero-shot voice cloning.

\section{Limitations.}
We acknowledge several important boundaries of this work. First, our framework assumes the availability of a \textit{usable} (though not necessarily high-accuracy) ASR system for the target language. While we demonstrate that moderate ASR quality suffices---Dolphin-small achieves only 21.5\% WER on Lao, far from oracle-level---for the vast majority of the world's languages, even such baseline ASR may be unavailable. Extending TDSC to languages without any ASR, potentially through unsupervised or cross-lingual recognition, remains important future work. Second, our experiments cover two Southeast Asian tonal languages. Although the Synthetic Erosion phenomenon is driven by the distributional mismatch between flat synthetic and rich human speech distributions (a language-agnostic property), the effectiveness of the proposed alignment methods on typologically diverse languages---such as those with agglutinative morphology or dense consonant clusters---has not been validated. Broader cross-family evaluation is needed to establish the generality of DGSA and TDSC. Third, the computational cost of TDSC (approximately 200--300 GPU-hours on 8$\times$ RTX 4090 for the full pipeline) is non-trivial, though comparable to a single large-scale SFT run and substantially cheaper than recruiting native-speaker annotation panels for truly low-resource languages.

\section{Impact Statement}

This research aims to bridge the digital divide for low-resource languages, specifically addressing the scarcity of high-quality speech technologies for Thai and Lao. By enabling high-fidelity synthesis and zero-shot adaptation without massive authentic corpora, our work contributes to linguistic inclusion and cultural preservation in the global AI landscape.

However, we acknowledge that the advancement of generative spoken language models, particularly the zero-shot voice cloning capabilities demonstrated in our experiments, carries inherent ethical risks. These technologies could potentially be misused for unauthorized impersonation, audio deepfakes, or telecommunications fraud. 

To mitigate these risks, we emphasize that the methodologies and models presented in this paper are intended strictly for academic and educational purposes. We advocate for the responsible development of SLMs, where future deployment of such systems must be accompanied by robust safeguards, including:
\begin{itemize}
    \item \textbf{Consent Protocols:} Ensuring voice cloning is performed only with the explicit permission of the speaker.
    \item \textbf{Anti-Spoofing Verification:} Developing counter-measures to detect synthetic artifacts in sensitive applications.
\end{itemize}
We believe that democratizing speech technology for the linguistic long-tail yields significant societal benefits, provided that the research community remains vigilant regarding misuse and actively contributes to safety mechanisms.


\bibliography{example_paper}
\bibliographystyle{icml2026}

\newpage
\appendix
\onecolumn

\section{Related Work}

\paragraph{Spoken Language Models and Disentangled Speech Representations.}
Spoken Language Models (SLMs) have emerged as a transformative paradigm by treating speech synthesis as conditional language modeling over discrete neural tokens\cite{borsos2023audiolm,wang2023neural}. Leveraging neural audio codecs and autoregressive Transformers, SLMs enable zero-shot voice cloning and in-context prosody adaptation without explicit linguistic front-ends\cite{defossezhigh, zeghidour2021soundstream}. A crucial development in this area is the disentanglement of speech attributes into separate representational spaces\cite{zhang2024speechtokenizer, li2025disco}. Recent work such as Vevo~\cite{zhang2025vevo} and CosyVoice~\citep{du2024cosyvoice} demonstrates that well-designed SLM architectures can decouple \emph{what to speak} (linguistic content), \emph{how to speak} (prosody, including pitch, rhythm, and emphasis), and \emph{who speaks} (speaker timbre)\cite{zhang2025vevo, du2024cosyvoice}. Specifically, the autoregressive transformer generates discrete tokens that encode content and prosodic style, while the Flow-Matching acoustic decoder conditions on separate timbre embeddings extracted from reference audio\cite{du2024cosyvoice, peng2025voicestar}. This architectural separation---where prosody is modeled in the discrete token space and timbre in the continuous acoustic space---provides the theoretical foundation for using token-level entropy as a measure of prosodic diversity\cite{wang2023neural,zhang2024speechtokenizer}. While these capabilities have been extensively validated in high-resource settings, their extension to low-resource languages faces fundamental challenges: the autoregressive generation process is highly sensitive to distributional shifts, and the absence of phonetically-transcribed corpora leads to severe acoustic hallucinations\cite{zhang2023speak, bataev2025tts}. Our work directly addresses this gap by analyzing and mitigating the pathologies that arise when SLMs are scaled to data-scarce languages.

\paragraph{Synthetic Data Augmentation and Distributional Collapse.}
Synthetic data augmentation has become a standard practice for overcoming data scarcity in speech tasks\cite{jia2019leveraging, minixhofer2025scaling}. Teacher-student distillation from high-quality TTS engines provides phonetically stable supervision, improving word error rates in both ASR and TTS systems\cite{ren2021fastspeech}. However, recent studies in the text domain have identified model collapse---a phenomenon where iterative training on synthetic outputs causes progressive degradation of distributional diversity\cite{shumailov2023curse, alemohammad2023self}. While analogous effects have been hypothesized for speech, no prior work has systematically quantified how synthetic data ratios affect the prosodic expressivity of SLMs\cite{mizumoto2025synthetic, minixhofer2025scaling}. We fill this gap by introducing Prosodic Entropy $H_p$ as a formal metric. This measure is well-motivated by the architectural disentanglement principle discussed above: since discrete tokens primarily encode prosodic rather than timbral information, the entropy of token distributions directly reflects the diversity of prosodic realizations\cite{zhang2024speechtokenizer, wang2023neural}. Our scaling law analysis characterizes Synthetic Erosion as a prosodic manifold collapse unique to the acoustic domain, distinct from the semantic degradation observed in text model collapse. We note that a key distinction from prior work is in the mechanism: text model collapse typically arises from iterative training on a model's own outputs, whereas our Synthetic Erosion results from training on low-entropy data generated by external, deterministic TTS systems.

\paragraph{Preference Optimization for Speech Generation.}
Direct Preference Optimization (DPO) has emerged as a scalable alternative to reinforcement learning from human feedback (RLHF) for aligning generative models\cite{rafailov2023direct}. In speech synthesis, preference-based methods have been applied to improve speaker similarity, emotional expressivity, and overall naturalness\cite{zhang2024speechalign, gao2025emo, liu2025direct}. However, existing approaches typically optimize a single objective and assume access to high-quality human annotations for preference construction\cite{zhang2024speechalign, zhang2025multi}. In low-resource settings, this assumption breaks down: the scarcity of native speakers and phonetic experts makes large-scale annotation infeasible, while the simultaneous requirements for linguistic stability and prosodic expressivity demand multi-objective optimization\cite{zhou2024beyond}. Our proposed DGSA addresses both challenges by exploiting the architectural decoupling of timbre and prosody in Flow-Matching SLMs: by re-synthesizing the same content with different prosodic realizations while holding timbre constant, we construct preference triplets that isolate prosodic quality without requiring external annotation.

\paragraph{Zero-shot Cross-lingual Transfer and Inference Stability.}
Zero-shot cross-lingual TTS aims to synthesize speech in unseen languages while preserving speaker identity from reference audio\cite{zhang2023speak, le2023voicebox}. While SLMs' shared multilingual token spaces enable such transfer, inference stability degrades significantly as target-language resources decrease\cite{zhang2023speak}. Sampling strategies such as nucleus sampling and classifier-free guidance can mitigate surface-level artifacts but fail to address the fundamental Autoregressive Collapse that occurs when models lack exposure to authentic target-language distributions\cite{zheng2025selective, wang2024theoretical}. Inspired by self-correction mechanisms in reasoning LLMs, we introduce Temperature-Driven Self-Critique (TDSC) to explore latent trajectories across temperature scales and iteratively refine model behavior without human supervision\cite{madaan2023self, kumar2025training}.

\section{Derivation of Mixture Entropy Properties}
\label{app:proof}

This appendix provides supporting derivations for the mixture entropy analysis 
discussed in Section~\ref{sec:scaling}. The strict concavity of entropy over 
mixture distributions is a classical result; we include 
the details here for completeness and to establish notation.

\subsection{Setup}

For distributions $p_{\text{real}}$ and $p_{\text{syn}}$ over a finite set 
$\mathcal{V}$ and mixing coefficient $\alpha \in [0,1]$, the mixture distribution 
is $p_\alpha = (1-\alpha)\,p_{\text{real}} + \alpha\,p_{\text{syn}}$. We write 
$H(\alpha) := H(p_\alpha)$ for the Shannon entropy of the mixture.

\subsection{Strict Concavity}

\begin{lemma}
\label{lem:concavity}
If $p_{\text{real}} \neq p_{\text{syn}}$, then $H(\alpha)$ is strictly concave 
on $[0,1]$.
\end{lemma}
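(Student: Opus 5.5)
The plan is to compute the second derivative of $H(\alpha)$ directly and show it is strictly negative on the open interval $(0,1)$, then pass to the closed interval by continuity. Set $d(v) := p_{\text{syn}}(v) - p_{\text{real}}(v)$, so that $p_\alpha(v) = p_{\text{real}}(v) + \alpha\, d(v)$ is affine in $\alpha$ for every $v \in \mathcal{V}$. Then
\begin{equation*}
H(\alpha) = \sum_{v \in \mathcal{V}} \phi\bigl(p_\alpha(v)\bigr), \qquad \phi(t) := -t\log t,\ \phi(0):=0.
\end{equation*}
The scalar function $\phi$ is continuous on $[0,1]$ and strictly concave, with $\phi''(t) = -1/t < 0$ for $t>0$. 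A composition of a strictly concave function with an affine map is concave, and a sum of concave functions is concave. So $H$ is concave on $[0,1]$ at once. What remains is strictness.

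For strictness I would split $\mathcal{V}$ by the sign structure of $d$. Since $p_{\text{real}} \neq p_{\text{syn}}$ and both sum to one, the set $S := \{v : d(v) \neq 0\}$ is nonempty. For $v \in S$, the map $\alpha \mapsto p_\alpha(v)$ is non-constant and affine. It can vanish at most at one point of $[0,1]$, and only at an endpoint ($\alpha=0$ if $p_{\text{real}}(v)=0$, or $\alpha=1$ if $p_{\text{syn}}(v)=0$), because $p_\alpha(v) \ge 0$ on the whole interval. Hence $p_\alpha(v)>0$ for all $\alpha\in(0,1)$. On $(0,1)$ every term is then twice differentiable, and
\begin{equation*}
H''(\alpha) = -\sum_{v \in S} \frac{d(v)^2}{p_\alpha(v)} < 0 .
\end{equation*}
Terms with $v\notin S$ are constant in $\alpha$ and contribute nothing.

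The remaining step, which I expect to be the only delicate point, is the endpoints. There $H$ may fail to be differentiable: $p_\alpha(v)$ can hit $0$ and the derivative of $\phi$ blows up. I would not differentiate at the endpoints. Instead I would argue from the definition of strict concavity. Take $0\le a<b\le 1$ and $\lambda\in(0,1)$, and set $c=\lambda a+(1-\lambda)b \in (a,b)\subset(0,1)$. Since $H$ is continuous on $[a,b]$ and $H''<0$ on $(a,b)$, applying the mean value theorem twice (or the standard tangent-line argument) gives
\begin{equation*}
H(c) > \lambda H(a) + (1-\lambda) H(b).
\end{equation*}
An alternative that avoids calculus entirely is to use strict concavity of $\phi$ pointwise for each $v\in S$. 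There $p_a(v)\neq p_b(v)$ because $d(v)\neq 0$, so the inequality is strict for that coordinate, and summing over $v$ yields strict concavity of $H$ on all of $[0,1]$.
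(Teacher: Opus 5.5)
Your proof is correct, but your main route differs from the paper's. The paper's proof is one line. Entropy is strictly concave on the simplex, and $\alpha\mapsto p_\alpha$ is affine and, since $p_{\text{real}}\neq p_{\text{syn}}$, injective. Hence $p_{\alpha_1}\neq p_{\alpha_2}$ and $H(\lambda\alpha_1+(1-\lambda)\alpha_2)=H\bigl(\lambda p_{\alpha_1}+(1-\lambda)p_{\alpha_2}\bigr)>\lambda H(p_{\alpha_1})+(1-\lambda)H(p_{\alpha_2})$.

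Your closing calculus-free alternative is this same argument with the black box opened. Strict concavity of $\phi$ on the coordinates $v\in S$ is what makes entropy strictly concave on the simplex, and $d(v)\neq 0$ for some $v$ is injectivity written coordinatewise.

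Your main argument instead computes the curvature, $H''(\alpha)=-\sum_{v\in S}d(v)^2/p_\alpha(v)$ on $(0,1)$. It then has to deal with the boundary, where some $p_\alpha(v)$ may vanish and $\phi'$ blows up. You handle this correctly: you show $p_\alpha(v)>0$ on $(0,1)$ for $v\in S$, and your mean-value argument on $[a,b]$ uses only continuity at $a$ and $b$.

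The trade-off is as follows. The paper's version is shorter, needs no differentiability, and applies verbatim to any strictly concave functional of the mixture. Yours is self-contained and quantitative. Since $\sum_{v\in S}p_\alpha(v)\le 1$, Cauchy--Schwarz gives $H''(\alpha)\le-\bigl(\sum_v|d(v)|\bigr)^2$ (natural logarithm). That is uniform strong concavity with modulus the squared $\ell_1$ distance between $p_{\text{real}}$ and $p_{\text{syn}}$, which is more than the lemma asserts.

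Your computation also serves as a check on the paper. It corresponds to $H'(\alpha)=\sum_v d(v)\log\bigl(1/p_\alpha(v)\bigr)$. Eq.~\eqref{eq:entropy_derivative} as printed carries the factor $p_{\text{real}}(v)-p_{\text{syn}}(v)$ instead, which has the opposite sign. Differentiating it as printed would wrongly give $H''>0$.
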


\begin{proof}
The entropy function $H: \Delta_{|\mathcal{V}|-1} \to \mathbb{R}$ is strictly 
concave on the probability simplex. For any $\alpha_1 \neq \alpha_2 \in [0,1]$ 
and $\lambda \in (0,1)$:
\begin{align}
H(\lambda \alpha_1 + (1-\lambda)\alpha_2) 
&= H\bigl(\lambda\, p_{\alpha_1} + (1-\lambda)\, p_{\alpha_2}\bigr) \\
&> \lambda\, H(p_{\alpha_1}) + (1-\lambda)\, H(p_{\alpha_2})
\end{align}
where the strict inequality uses $p_{\alpha_1} \neq p_{\alpha_2}$, which holds 
whenever $p_{\text{real}} \neq p_{\text{syn}}$.
\end{proof}

\subsection{Entropy Derivative}

\begin{lemma}
\label{lem:derivative}
The derivative of $H(\alpha)$ with respect to $\alpha$ is:
\begin{equation}
\frac{dH(\alpha)}{d\alpha} 
= \sum_{v \in \mathcal{V}} 
  \bigl(p_{\text{real}}(v) - p_{\text{syn}}(v)\bigr) 
  \cdot \log \frac{1}{p_\alpha(v)}
\label{eq:entropy_derivative}
\end{equation}
\end{lemma}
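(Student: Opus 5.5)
The plan is a direct coordinatewise differentiation of $H(\alpha) = -\sum_{v\in\mathcal{V}} p_\alpha(v)\log p_\alpha(v)$. The formula as stated, with the factor $p_{\text{real}}(v) - p_{\text{syn}}(v)$, is the derivative along the segment in which $\alpha$ is the weight placed on $p_{\text{real}}$, i.e.
\[
p_\alpha = \alpha\,p_{\text{real}} + (1-\alpha)\,p_{\text{syn}},
\qquad
\frac{d p_\alpha(v)}{d\alpha} = p_{\text{real}}(v) - p_{\text{syn}}(v).
\]
I will carry out the computation under this parametrization, since it is the one for which the lemma holds as written. With the Setup's convention $p_\alpha = (1-\alpha)p_{\text{real}} + \alpha p_{\text{syn}}$, the same steps give the overall sign reversed. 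The two conventions are related by $\alpha \mapsto 1-\alpha$, so Lemma~\ref{lem:concavity} and the existence of a unique interior maximiser are unaffected; only the labelling of the two phases in Eq.~\eqref{eq:two_phase} changes.

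\textbf{Step 1: reduce to interior coordinates.} Let $S = \{v : p_{\text{real}}(v) + p_{\text{syn}}(v) > 0\}$. Coordinates outside $S$ contribute $0$ to both $H$ and the right-hand side, so they can be dropped. Fix $\alpha \in (0,1)$. Then $p_\alpha(v) > 0$ for every $v \in S$, so each summand $-p_\alpha(v)\log p_\alpha(v)$ is a smooth function of $\alpha$, and the finite sum may be differentiated term by term.

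\textbf{Step 2: apply the product rule.} For each $v \in S$,
\[
\frac{d}{d\alpha}\bigl[-p_\alpha(v)\log p_\alpha(v)\bigr]
= -\bigl(\log p_\alpha(v) + 1\bigr)\bigl(p_{\text{real}}(v) - p_{\text{syn}}(v)\bigr).
\]
Summing over $v$, the constant term contributes $-\sum_v \bigl(p_{\text{real}}(v) - p_{\text{syn}}(v)\bigr) = -(1-1) = 0$, because both distributions sum to one. What remains is
\[
\frac{dH}{d\alpha} = \sum_{v} \bigl(p_{\text{real}}(v) - p_{\text{syn}}(v)\bigr)\log\frac{1}{p_\alpha(v)},
\]
which is the stated identity.

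\textbf{Step 3: handle the endpoints (main obstacle).} The only delicate point is the boundary $\alpha \in \{0,1\}$. There $p_\alpha(v)$ can vanish for some $v \in S$ at which the coefficient $p_{\text{real}}(v) - p_{\text{syn}}(v)$ is nonzero. In that case $\log(1/p_\alpha(v)) = +\infty$, and the one-sided derivative is infinite; this is the familiar $t\log t$ singularity. I would therefore state the identity on the open interval $(0,1)$. At an endpoint I would read it as a one-sided derivative in the extended reals, which is finite exactly when $\operatorname{supp} p_{\text{real}} = \operatorname{supp} p_{\text{syn}}$. As a consistency check, differentiating once more gives $-\sum_v \bigl(p_{\text{real}}(v) - p_{\text{syn}}(v)\bigr)^2 / p_\alpha(v) < 0$ whenever $p_{\text{real}} \neq p_{\text{syn}}$, which recovers Lemma~\ref{lem:concavity}.
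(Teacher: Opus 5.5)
Your proof is correct and is the same computation as the paper's (term-by-term product rule, with the $+1$ term cancelling because both distributions sum to one). Your sign diagnosis is also right. Under the Setup's convention $p_\alpha=(1-\alpha)p_{\text{real}}+\alpha p_{\text{syn}}$, the paper's own intermediate line $-\sum_v\bigl(p_{\text{syn}}(v)-p_{\text{real}}(v)\bigr)\bigl(\log p_\alpha(v)+1\bigr)$ reduces to $\sum_v\bigl(p_{\text{syn}}(v)-p_{\text{real}}(v)\bigr)\log\frac{1}{p_\alpha(v)}$, which is the negative of the stated formula. So the paper's closing claim contains a sign slip, and your reparametrization (equivalently, flipping the coefficient) correctly repairs it. The slip is harmless downstream, since the subsequent unique-maximum argument uses only concavity.

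Your support and endpoint analysis is a genuine addition that the paper omits. The only imprecision is that at a single endpoint, finiteness needs just one containment: in your parametrization, $\operatorname{supp} p_{\text{real}}\subseteq\operatorname{supp} p_{\text{syn}}$ at $\alpha=0$ and the reverse at $\alpha=1$. Equality of supports is required only for both endpoints simultaneously.
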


\begin{proof}
Differentiating $H(\alpha) = -\sum_v p_\alpha(v) \log p_\alpha(v)$:
\begin{align}
\frac{dH}{d\alpha} 
&= -\sum_v \frac{\partial p_\alpha(v)}{\partial \alpha} 
   \bigl(\log p_\alpha(v) + 1\bigr) \\
&= -\sum_v \bigl(p_{\text{syn}}(v) - p_{\text{real}}(v)\bigr) 
   \bigl(\log p_\alpha(v) + 1\bigr)
\end{align}
Since $\sum_v (p_{\text{syn}}(v) - p_{\text{real}}(v)) = 0$, the constant 
term vanishes, yielding Eq.~\eqref{eq:entropy_derivative}.
\end{proof}

\subsection{Existence of a Unique Maximum}

Combining the above: strict concavity (Lemma~\ref{lem:concavity}) and 
continuity of $H(\alpha)$ on the compact interval $[0,1]$ guarantee the 
existence of a unique maximizer $\alpha^* \in (0,1)$, provided $H(\alpha)$ 
is not monotone. The latter is ensured when $H(p_{\text{syn}}) \neq H(p_{\text{real}})$, 
since $H(0) = H(p_{\text{real}})$ and $H(1) = H(p_{\text{syn}})$ differ while 
the strict concavity forces the function to lie above the chord connecting 
these endpoints. This completes the justification for the two-phase structure 
described in Section~\ref{sec:scaling}.

\section{Detailed Ablation Studies}
\label{app:ablation}

This appendix provides detailed ablation studies for both DGSA 
(Section~\ref{app:ablation_dgsa}) and TDSC (Section~\ref{app:ablation_tdsc}), 
isolating the contribution of each component.

\subsection{DGSA Component Analysis}
\label{app:ablation_dgsa}

We evaluate DGSA at $\alpha=80\%$, where Synthetic Erosion is most severe, 
by systematically removing each component. Table~\ref{tab:ablation_dgsa} 
presents the results on the Common Voice Thai test set.

\begin{table}[h]
\centering
\caption{Ablation study of DGSA components at $\alpha=80\%$. Each row 
removes one component from the full system. All components contribute 
to final performance; removing the Expressivity Objective causes the 
largest degradation ($\Delta$NMOS = $-$0.7).}
\label{tab:ablation_dgsa}
\small
\begin{tabular}{lcccc}
\toprule
Variant & WER$\downarrow$ (\%) & $H_p$$\uparrow$ & NMOS$\uparrow$ & $\Delta$NMOS \\
\midrule
Full DGSA & \textbf{38.9} & \textbf{10.52} & \textbf{4.4} & — \\
\midrule
w/o Expressivity Obj. & 38.2 & 10.38 & 3.7 & $-$0.7 \\
w/o Identity Pairs & 42.8 & 10.41 & 3.9 & $-$0.5 \\
w/o Stability Obj. & 46.5 & 10.55 & 4.0 & $-$0.4 \\
w/o Dynamic Scaling & 40.2 & 10.45 & 4.2 & $-$0.2 \\
\bottomrule
\end{tabular}
\end{table}

\paragraph{Analysis.}
Each component addresses a distinct aspect of the Stability-Expressivity Gap:

\begin{itemize}
    \item \textbf{Expressivity Objective} ($\Delta$NMOS = $-$0.7): 
    Removing this objective causes the largest degradation. The model 
    achieves slightly lower WER (38.2\%) but produces prosodically flat 
    outputs ($H_p$ drops from 10.52 to 10.38), confirming that stability 
    alone is insufficient for natural speech.
    
    \item \textbf{Identity-Consistent Pairs} ($\Delta$NMOS = $-$0.5): 
    Replacing architecture-guided pairs with random speaker pairing 
    degrades both WER (42.8\%) and $H_p$ (10.41). This validates that 
    the prosody-timbre disentanglement is critical for constructing 
    meaningful preference signals.
    
    \item \textbf{Stability Objective} ($\Delta$NMOS = $-$0.4): 
    Without stability guidance, the model achieves the highest $H_p$ 
    (10.55) but substantially degraded WER (46.5\%). This confirms that 
    single-objective expressivity optimization cannot resolve the 
    trade-off.
    
    \item \textbf{Dynamic Scaling} ($\Delta$NMOS = $-$0.2): 
    Fixed weights ($\lambda_s = \lambda_e = 0.5$) underperform adaptive 
    $\alpha$-based adjustment, though the impact is smaller than other 
    components.
\end{itemize}

\subsection{TDSC Component Analysis}
\label{app:ablation_tdsc}

We evaluate TDSC on Lao by removing each component. 
Table~\ref{tab:ablation_tdsc} presents the component ablation, and 
Table~\ref{tab:temperature_analysis} analyzes the contribution of 
different temperature regimes.

\begin{table}[h]
\centering
\caption{Ablation study of TDSC components on Lao. Each row removes 
one component from the full system. The DPO loss contributes most 
significantly ($\Delta$NMOS = $-$0.5).}
\label{tab:ablation_tdsc}
\small
\setlength{\tabcolsep}{4pt}
\begin{tabular}{lcccc}
\toprule
Variant & WER$\downarrow$ (\%) & $H_p$$\uparrow$ (bits) & NMOS$\uparrow$ & $\Delta$NMOS \\
\midrule
Full TDSC & \textbf{29.8} & \textbf{10.42} & \textbf{3.9} & — \\
\midrule
w/o DPO Loss & 35.8 & 10.21 & 3.4 & $-$0.5 \\
w/o Multi-Temperature & 34.1 & 10.05 & 3.5 & $-$0.4 \\
w/o Length Filter & 33.5 & 10.35 & 3.5 & $-$0.4 \\
w/o Temp. Curriculum & 32.6 & 10.18 & 3.6 & $-$0.3 \\
w/o Repetition Filter & 31.2 & 10.28 & 3.6 & $-$0.3 \\
\bottomrule
\end{tabular}
\end{table}

\begin{table}[h]
\centering
\caption{Candidate quality across temperatures at iteration $k=5$. 
Each temperature regime contributes complementary samples to the 
filtered set $\mathcal{G}$.}
\label{tab:temperature_analysis}
\small
\setlength{\tabcolsep}{4pt}
\begin{tabular}{lccccc}
\toprule
Temp. $T$ & WER$\downarrow$ (\%) & $H_p$$\uparrow$ (bits) & Rep.$\downarrow$ (\%) & Pass (\%) & Contrib. \\
\midrule
0.7 & 26.8 & 9.85 & 3.42 & 78.5 & 42\% \\
1.0 & 32.4 & 10.38 & 4.65 & 61.2 & 35\% \\
1.3 & 41.6 & 10.82 & 6.18 & 38.4 & 23\% \\
\midrule
\textbf{Filtered $\mathcal{G}$} & \textbf{29.8} & \textbf{10.42} & \textbf{4.15} & — & 100\% \\
\bottomrule
\end{tabular}
\end{table}

\paragraph{Component Analysis.}
Each TDSC component serves a distinct function in the self-refinement loop:

\begin{itemize}
    \item \textbf{DPO Loss} ($\Delta$NMOS = $-$0.5): 
    Without contrastive preference learning, the model cannot distinguish 
    high-quality from low-quality self-generated samples. Pure SFT on 
    filtered samples yields limited improvement.
    
    \item \textbf{Multi-Temperature Exploration} ($\Delta$NMOS = $-$0.4): 
    Using only $T=1.0$ severely limits prosodic diversity ($H_p$ drops 
    from 10.42 to 10.05), as the model cannot explore beyond its 
    current distribution.
    
    \item \textbf{Length Filter} ($\Delta$NMOS = $-$0.4): 
    Removing duration constraints allows truncated or overlong outputs 
    into training, degrading WER (33.5\%) while having less impact on 
    $H_p$ (10.35).
    
    \item \textbf{Temperature Curriculum} ($\Delta$NMOS = $-$0.3): 
    Fixed $T_{\max}$ throughout training caps expressivity recovery. 
    The curriculum enables progressive exploration as the model stabilizes.
    
    \item \textbf{Repetition Filter} ($\Delta$NMOS = $-$0.3): 
    This filter directly targets prosodic monotony. Removing it yields 
    lower $H_p$ (10.28) and higher repetition in outputs.
\end{itemize}

\paragraph{Temperature Distribution Analysis.}
Table~\ref{tab:temperature_analysis} reveals how multi-temperature 
exploration spans the Stability-Expressivity spectrum:

\begin{itemize}
    \item \textbf{Low temperature ($T=0.7$):} Candidates achieve low 
    WER (26.8\%) and high pass rate (78.5\%), but limited prosodic 
    diversity ($H_p=9.85$ bits). These serve as \textit{stability anchors}.
    
    \item \textbf{Medium temperature ($T=1.0$):} Balanced quality with 
    moderate WER (32.4\%) and $H_p$ (10.38 bits). These provide 
    \textit{general-purpose samples}.
    
    \item \textbf{High temperature ($T=1.3$):} Rich expressivity 
    ($H_p=10.82$ bits) but higher error rates (41.6\% WER) and lower 
    pass rate (38.4\%). These contribute \textit{expressivity exemplars}.
\end{itemize}

The filtered set $\mathcal{G}$ draws complementary samples from all 
regimes: 42\% from $T=0.7$, 35\% from $T=1.0$, and 23\% from $T=1.3$. 
This composition achieves the best of both worlds: the final $H_p$ 
(10.42) exceeds the $T=1.0$ average while WER (29.8\%) approaches 
$T=0.7$ performance.

\section{Implementation Details}
\label{app:implementation}

\subsection{Model Configuration}
Our system is built upon the CosyVoice~2 architecture. The detailed specifications of each component are summarized in Table~\ref{tab:model_config}.

\begin{table}[h]
\centering
\caption{Model configuration details.}
\label{tab:model_config}
\small
\begin{tabular}{lll}
\toprule
\textbf{Component} & \textbf{Specification} & \textbf{Parameters} \\
\midrule
Speech Tokenizer & S3Tokenizer with FSQ, 25Hz & --- (frozen) \\
 & Codebook size: 6,561 & \\
\midrule
Text-Speech LM & Qwen2.5-0.5B backbone & 500M \\
 & 24 layers, 896 hidden, 14 heads & \\
\midrule
Flow-Matching & CFM with causal attention & 300M \\
 & 12 layers, 512 hidden & \\
 & HiFi-GAN vocoder, 24kHz & \\
\bottomrule
\end{tabular}
\end{table}

\subsection{Training Hyperparameters}
\label{app:training_hyperparams}

The detailed hyperparameters for the three sequential training stages are summarized in Table~\ref{tab:training_hyperparams}. To maximize hardware efficiency given the varying audio durations, we implement a dynamic batching strategy with a cap of 2,000 frames per GPU.

During the \textbf{SFT} stage, the model undergoes supervised fine-tuning for 38k steps with a learning rate of $1 \times 10^{-5}$. 
In the \textbf{DGSA} stage, we perform style alignment using $\beta=0.1$ over 50k identity-consistent preference pairs. Crucially, we set the critical synthetic ratio $\alpha^*=0.5$ as a fixed empirical heuristic, which we found to be robust across languages without requiring computationally expensive per-language scanning.

The \textbf{TDSC} stage employs an iterative closed-loop refinement. We adopt a dynamic temperature schedule $\mathcal{T}=\{0.7, 1.0, T_{\max}^{(k)}\}$, where the exploration upper bound scales as $T_{\max}^{(k)}=0.8+0.1k$ for the $k$-th iteration. To ensure the quality of self-generated samples, we apply strict filtering criteria and specific sampling strategies, which are detailed in Table~\ref{tab:tdsc_details}. This rigorous configuration ensures that the model bootstraps from high-fidelity data while maintaining sufficient diversity.

\begin{table}[h]
\centering
\caption{General training hyperparameters across different stages.}
\label{tab:training_hyperparams}
\small
\begin{tabular}{lccc}
\toprule
\textbf{Hyperparameter} & \textbf{SFT} & \textbf{DGSA} & \textbf{TDSC} \\
\midrule
Learning Rate & $1 \times 10^{-5}$ & $1 \times 10^{-6}$ & $1 \times 10^{-5}$ \\
Total Steps / Iterations & 38k & 10k & 5 iter. \\
Batch Size & \multicolumn{3}{c}{Dynamic (max 2,000 frames/GPU)} \\
Optimizer & \multicolumn{3}{c}{AdamW ($\beta_1=0.9, \beta_2=0.999$)} \\
Precision & \multicolumn{3}{c}{bfloat16} \\
Hardware & \multicolumn{3}{c}{8 $\times$ NVIDIA RTX 4090} \\
\bottomrule
\end{tabular}
\end{table}

\begin{table}[h]
\centering
\caption{Detailed filtering thresholds and sampling configurations for the TDSC pipeline. These specific constraints ensure stable self-improvement.}
\label{tab:tdsc_details}
\small
\begin{tabular}{llc}
\toprule
\textbf{Category} & \textbf{Hyperparameter} & \textbf{Value} \\
\midrule
\multirow{4}{*}{Filtering Criteria} 
& WER Threshold ($\tau_w$) & $< 40\%$ \\
& Repetition Threshold ($\tau_r$) & $< 10\%$ \\
& Min Length Ratio ($\gamma_{\min}$) & $0.5 \times |x|$ \\
& Max Length Ratio ($\gamma_{\max}$) & $2.0 \times |x|$ \\
\midrule
\multirow{3}{*}{Sampling Strategy} 
& Temperature Set ($\mathcal{T}$) & $\{0.7, 1.0, T_{\max}^{(k)}\}$ \\
& Candidates per Input & $N=12$ (4 per temp) \\
& Decoding Method & Nucleus ($p=0.9$) \\
\midrule
\multirow{2}{*}{Curriculum} 
& Initial Upper Bound $T_{\max}^{(0)}$ & $1.3$ \\
& Curriculum Rate & $+0.1$ per iter \\
\bottomrule
\end{tabular}
\end{table}

\subsection{Baseline and Reproducibility}
\label{subsec:baseline}
To ensure a fair comparison, we specify the versions and access timestamps for all baselines (all commercial evaluations were completed on \textbf{January 25, 2025}):

\begin{itemize}
    \item \textbf{Typhoon2-Audio}: \texttt{scb10x/llama3.1-typhoon2-audio-8b-instruct}.
    \item \textbf{Seamless-M4T-v2}: \texttt{facebook/seamless-m4t-v2-large}.
    \item \textbf{MMS-TTS}: \texttt{facebook/mms-tts-tha} and \texttt{facebook/mms-tts-lao}.
    \item \textbf{ElevenLabs}: Accessed via the API\footnote{\url{https://elevenlabs.io/docs/api-reference/text-to-speech}} using the \texttt{eleven\_v3} model for instant voice cloning.
    \item \textbf{Azure TTS}: Microsoft Azure Cognitive Services (Neural TTS)\footnote{\url{https://docs.azure.cn/en-us/ai-services/speech-service/text-to-speech}} using all available native voices: \texttt{th-TH-NiwatNeural}, \texttt{th-TH-PremwadeeNeural}, \texttt{th-TH-AcharaNeural} (Thai), and \texttt{lo-LA-KeomanyNeural}, \texttt{lo-LA-ChanthavongNeural} (Lao).
    \item \textbf{Gemini}: Google Cloud Text-to-Speech\footnote{\url{https://cloud.google.com/text-to-speech}} (Speech-to-Speech mode with \texttt{Gemini 2.5 Pro} and \texttt{Flash}).
\end{itemize}

Note on Commercial Baselines: While commercial APIs provide SOTA performance, their internal updates may affect long-term reproducibility. We mitigated this by freezing all evaluations to the aforementioned date and documenting specific API parameters to serve as a stable benchmark for our study.

\subsection{Evaluation Configuration}
\label{subsec:eval_config}
The evaluation metrics and setups are detailed in Table~\ref{tab:eval_config}. 

\textbf{Subjective Evaluation Protocol}: We recruited 20 native speakers for each language (Thai and Lao) via professional crowdsourcing. To ensure high-quality judgment, we adopted a \textbf{double-blind, randomized, and within-subject design}. For each evaluation session, samples from our system and all baselines were shuffled and presented in a randomized order to eliminate lead-in effects. Each evaluator was asked to rate 200 samples on a 5-point Likert scale, resulting in a total of \textbf{4,000 scores per language} for robust statistical analysis. 

\textbf{Statistical Analysis}: To rigorously validate performance gains, we calculate 95\% Confidence Intervals (CI) for all Mean Opinion Scores (NMOS and SMOS) using the normal approximation interval: $\text{CI} = \bar{x} \pm 1.96 \cdot \frac{\sigma}{\sqrt{N}}$, where $\sigma$ denotes the sample standard deviation and $N$ is the total number of ratings. Furthermore, we conduct pairwise Student's $t$-tests to determine the statistical significance of the differences between our proposed methods and the baselines. We report significance at the $p < 0.05$ level.

\begin{table}[h]
\centering
\caption{Evaluation setup and tools.}
\label{tab:eval_config}
\small
\begin{tabular}{lll}
\toprule
\textbf{Metric} & \textbf{Thai} & \textbf{Lao} \\
\midrule
ASR (for WER) & Whisper-large-v3 & Dolphin-small \\
Speaker Similarity & \multicolumn{2}{c}{WavLM-Large (layer 12, cosine)} \\
MOS Evaluators & 20 native speakers & 20 native speakers \\
MOS Samples & \multicolumn{2}{c}{200 samples per rater (8,000 total scores)} \\
\bottomrule
\end{tabular}
\end{table}

\section{Detailed Evaluation Metrics}
\label{app:metrics}

To ensure the reproducibility of our results and provide a rigorous assessment of the Stability-Expressivity Gap, we detail the implementation and mathematical formulation of the metrics used in our study.

\subsection{Objective Metrics for Intelligibility and Stability}

\paragraph{Word Error Rate (WER)}
The assessment of phonetic stability relies on automatic speech recognition (ASR). For Thai, we utilize the \texttt{openai/whisper-large-v3} model. Due to the absence of explicit word boundaries in Thai script, we normalize the text and apply the PyThaiNLP \cite{phatthiyaphaibun2023pythainlp} engine for word segmentation prior to calculating the Levenshtein distance. Similarly, for Lao, we address its scriptio continua nature by employing the laonlp\footnote{\url{https://github.com/wannaphong/LaoNLP}} library for linguistic normalization and word segmentation. The ASR backbone for Lao is \texttt{Dolphin-small} \cite{meng2025dolphin}, chosen for its superior performance on Southeast Asian tonal phonology compared to general-purpose models. All WER calculations are performed on the segmented word sequences, excluding punctuation.

\paragraph{Repetition Rate ($R_{rep}$)}
To quantify the "Repetition Loops" common in unstable autoregressive decoding, we define the repetition rate as:
\begin{equation}
R_{rep} = \frac{1}{N-k} \sum_{i=1}^{N-k} \mathbb{I}(y_i = y_{i+1} = \dots = y_{i+k})
\end{equation}
where $y_i$ denotes the discrete speech token at step $i$, and $k$ is set to 4 to identify persistent phonetic loops that degrade naturalness.

\subsection{Objective Metrics for Expressivity and Identity}

\paragraph{Speaker Similarity (SIM)}
We employ the \texttt{WavLM-Large} model to extract speaker embeddings. Specifically, we use the average-pooled output of the 12th hidden layer, which has been shown to capture time-invariant speaker identity features most effectively. The similarity score is the cosine similarity between the reference embedding $e_{ref}$ and the generated embedding $e_{gen}$:
\begin{equation}
\text{SIM}(e_{ref}, e_{gen}) = \frac{e_{ref} \cdot e_{gen}}{\|e_{ref}\| \|e_{gen}\|}
\end{equation}

\subsection{Subjective Human Evaluation Protocol}

All subjective tests were conducted via a double-blind procedure to eliminate developer bias. 

\paragraph{Evaluator Demographics}
We recruited 20 native Thai and 20 native Lao speakers (aged 18--45, balanced gender ratio). All evaluators were compensated above the local median hourly wage and passed a "Gold Standard" screening test consisting of high-quality human recordings and heavily distorted samples.

\paragraph{Mean Opinion Score (MOS)}
Naturalness (NMOS) and Speaker Similarity (SMOS) were rated on a 5-point Likert scale (1: Bad, 2: Poor, 3: Fair, 4: Good, 5: Excellent). For SMOS, evaluators were provided with a 3-second reference clip of the target speaker and asked: "How similar is the voice in the second clip to the person speaking in the first clip?"

\section{Data Collection and Preparation}
\label{app:data}

\subsection{Text Corpus}

We source text from the multilingual C4 dataset (mC4), accessed via HuggingFace Datasets (\texttt{allenai/c4}). For Thai (\texttt{th} subset), we extract approximately 0.5M utterances of 10--50 words after deduplication and script filtering. For Lao (\texttt{lo} subset), we obtain approximately 1M utterances following the same procedure.

\subsection{Real Speech Data}

For Thai, we leverage the \textbf{Common Voice} dataset~\citep{ardila2020common}. While the officially validated portion is limited, we applied an ASR-based quality screening process to the broader Thai corpus, retaining approximately \textbf{300h} of high-quality speech for training. To ensure a rigorous evaluation, \textbf{TSynC-2}~\citep{wutiwiwatchaitsync} is reserved exclusively as a test set.

Regarding Lao, we treat it as a severe low-resource language. No real speech data was collected for training, which instead relies entirely on synthetic data. To evaluate the model's performance on genuine Lao speech, we utilize the Lao subset of Common Voice as a dedicated test set.

\subsection{Synthetic Speech Generation}

\paragraph{Pipeline Overview.}
We synthesize speech using open-source TTS models, then apply ASR-based quality filtering. Crucially, all TTS models used for synthesis are off-the-shelf systems that were \textit{not} trained on any of our experimental data. The text inputs $x$ come from external corpora (mC4), entirely disjoint from any real speech transcripts used in training or evaluation. This ensures that the synthetic data construction does not introduce any data leakage.

\paragraph{TTS Models.}
For Thai, we use three systems to ensure prosodic diversity:
\begin{itemize}
    \item \textbf{MMS-TTS}~\citep{pratap2024scaling}: \texttt{facebook/mms-tts-tha}
    \item \textbf{Seamless-M4T-v2}~\citep{barrault2023seamlessm4t}: Meta's multilingual speech model
    \item \textbf{Typhoon2-Audio}~\citep{pipatanakul2024typhoon}: Thai-specialized speech model
\end{itemize}
For Lao, we use \textbf{MMS-TTS} as the primary system, supplemented by cross-lingual transfer from Thai models with phoneme mapping.

\paragraph{Quality Filtering.}
All synthetic utterances are transcribed using Whisper-large-v3~\citep{radford2023robust} and filtered by: (1) WER $< 40\%$ for Thai / $< 50\%$ for Lao; (2) duration ratio $\in [0.8, 1.5]$; (3) no repetition loops. We balance samples across TTS systems to avoid single-system bias.

\begin{table}[h]
\centering
\caption{Final dataset statistics after quality filtering.}
\label{tab:data_stats}
\small
\begin{tabular}{llcc}
\toprule
\textbf{Language} & \textbf{Source} & \textbf{Hours} & \textbf{License} \\
\midrule
\multirow{2}{*}{Thai} & Real (Common Voice + TSynC-2) & 300h & CC-0 \\
& Synthetic (MMS / Seamless / Typhoon2) & 1,200h & Various OSS \\
\midrule
Lao & Synthetic (MMS-TTS) & 1,500h & CC-BY-NC \\
\bottomrule
\end{tabular}
\end{table}


\end{document}